\documentclass{article}

\usepackage{neurips_2024}

\usepackage{microtype}
\usepackage{graphicx}
\usepackage{booktabs}
\usepackage{amsmath,amssymb,amsthm}
\usepackage{mathtools}
\usepackage{algorithm}
\usepackage{algpseudocode}
\usepackage{xcolor}
\usepackage{pifont}
\usepackage{enumitem}
\usepackage{subcaption}
\usepackage{hyperref}
\usepackage{cleveref}
\usepackage{pgfplots}
\pgfplotsset{compat=1.18}
\usepgfplotslibrary{groupplots,fillbetween}
\usetikzlibrary{shapes,arrows.meta,positioning,fit,backgrounds,calc,patterns,decorations.pathreplacing}

\newtheorem{theorem}{Theorem}[section]

\newtheorem{definition}[theorem]{Definition}

\newtheorem{example}[theorem]{Example}

\newcommand{\sys}{MoLoRA\xspace}
\newcommand{\RR}{\mathbb{R}}
\newcommand{\ZZ}{\mathbb{Z}}

\newcommand{\cO}{\mathcal{O}}
\newcommand{\cL}{\mathcal{L}}
\newcommand{\cR}{\mathcal{R}}
\newcommand{\cA}{\mathcal{A}}
\newcommand{\cM}{\mathcal{M}}
\newcommand{\cT}{\mathcal{T}}
\newcommand{\cX}{\mathcal{X}}
\newcommand{\cC}{\mathcal{C}}
\usepackage{xspace}

\title{MoLoRA: Composable Specialization via Per-Token Adapter Routing}

\author{
  Shrey Shah\\
  \texttt{shreyshah@microsoft.com}
  \And
  Justin Wagle\\
  \texttt{justiwag@microsoft.com}
}
\date{}

\begin{document}

\maketitle

\begin{abstract}
Multi-adapter serving systems route entire sequences to a single adapter, forcing a choice when requests span multiple domains. This assumption fails in two important settings: (1) multimodal generation, where text and image tokens require different adapters within the same sequence, and (2) mixed-capability requests like ``write code to solve this equation,'' which need expertise from multiple specialized adapters.
We introduce \emph{per-token routing}, which routes individual tokens to adapters based on either vocabulary structure (for multimodal models) or learned gating (for semantic specialization). Per-token routing is provably optimal, achieving work $N$ for $N$ tokens versus $K \cdot N$ for per-sequence routing with $K$ adapter types.
Our key contribution is \textbf{MoLoRA} (Mixture of LoRA), which enables \emph{composable specialization}: load multiple domain-specific adapters and let a learned router select the appropriate adapter per-token. We demonstrate that \textbf{specialization dramatically beats scale}: MoLoRA enables \textbf{Qwen3-1.7B to exceed Qwen3-8B} across four reasoning benchmarks while being \textbf{4.7$\times$ smaller}. This enables modular expertise at inference time: train focused LoRAs independently, combine them without retraining, and add new capabilities by simply loading new adapters.
\end{abstract}

\section{Introduction}
\label{sec:intro}

Low-rank adaptation (LoRA)~\citep{hu2021lora} enables efficient fine-tuning of large language models by learning low-rank updates to pretrained weights. Multi-adapter serving systems such as S-LoRA~\citep{sheng2024slora} and Punica~\citep{chen2023punica} extend this to serve multiple adapters concurrently, enabling a single base model deployment to serve many fine-tuned variants.

However, existing systems share a fundamental assumption: each request routes to exactly one adapter. Formally, given a batch of $N$ tokens from $B$ sequences, these systems compute the routing function $j: [B] \to [K]$ mapping sequences to adapters, then apply:
\begin{equation}
    h_i = x_i W + x_i A_{j(s_i)} B_{j(s_i)}
    \label{eq:perseq}
\end{equation}
where $s_i$ denotes the sequence containing token $i$. This \emph{per-sequence} routing causes two problems:

\paragraph{Problem 1: Multimodal Efficiency.} Frontier models like Gemini~\citep{gemini2025} generate interleaved text and images in a single response---an illustrated recipe where text instructions alternate with generated images. With per-sequence routing, all tokens must use the same adapter, even though text and image tokens should use modality-specialized adapters. This forces either suboptimal adapter selection or expensive sequence splitting ($K$ forward passes for $K$ modalities).

\paragraph{Problem 2: Mixed-Capability Quality.} Consider a request like ``write Python code to solve this differential equation.'' A code adapter excels at syntax but not calculus; a math adapter handles equations but not programming idioms. With per-sequence routing, we must choose one and accept suboptimal quality on the other capability. No single adapter can match multiple specialists.

We introduce \emph{per-token routing}, which generalizes Equation~\ref{eq:perseq} to route individual tokens:
\begin{equation}
    h_i = x_i W + x_i A_{r(i)} B_{r(i)}
    \label{eq:pertok}
\end{equation}
where $r: [N] \to [C]$ routes each token to one of $C$ computational targets. Per-token routing solves both problems: it reduces $K$ forward passes to 1 (Problem 1), and enables different tokens to use different specialized adapters within the same sequence (Problem 2). The routing function $r$ can be deterministic (based on vocabulary structure, for multimodal models) or learned (for semantic specialization).

\begin{figure}[t]
\centering
\begin{tikzpicture}[
    token/.style={rectangle, draw, minimum width=0.55cm, minimum height=0.5cm, font=\tiny},
    texttoken/.style={token, fill=blue!20},
    imgtoken/.style={token, fill=orange!30},
    audiotoken/.style={token, fill=green!20},
    videotoken/.style={token, fill=purple!20},
    adapter/.style={rectangle, draw, rounded corners, minimum width=1.0cm, minimum height=0.6cm, font=\scriptsize},
    arrow/.style={-Stealth, thick},
    label/.style={font=\scriptsize}
]

\node[font=\small\bfseries] at (-3.2, 2.2) {Per-Sequence Routing};

\node[texttoken] (t1) at (-4.8, 1.2) {T};
\node[imgtoken] (i1) at (-4.2, 1.2) {I};
\node[audiotoken] (a1) at (-3.6, 1.2) {A};
\node[videotoken] (v1) at (-3.0, 1.2) {V};
\node[texttoken] (t2) at (-2.4, 1.2) {T};
\node[imgtoken] (i2) at (-1.8, 1.2) {I};

\node[label, anchor=east] at (-5.1, 1.2) {Input:};

\node[adapter, fill=gray!20] (adapt1) at (-3.3, 0) {Adapter 1};

\draw[arrow, blue!60] (t1.south) -- (adapt1.north);
\draw[arrow, orange!60] (i1.south) -- (adapt1.north);
\draw[arrow, green!60] (a1.south) -- (adapt1.north);
\draw[arrow, purple!60] (v1.south) -- (adapt1.north);
\draw[arrow, blue!60] (t2.south) -- (adapt1.north);
\draw[arrow, orange!60] (i2.south) -- (adapt1.north);

\node[font=\tiny, red!70!black, align=center] at (-3.3, -0.8) {All tokens $\to$ same adapter\\(suboptimal for mixed content)};

\node[font=\small\bfseries] at (3.2, 2.2) {Per-Token Routing (Ours)};

\node[texttoken] (pt1) at (1.2, 1.2) {T};
\node[imgtoken] (pi1) at (1.8, 1.2) {I};
\node[audiotoken] (pa1) at (2.4, 1.2) {A};
\node[videotoken] (pv1) at (3.0, 1.2) {V};
\node[texttoken] (pt2) at (3.6, 1.2) {T};
\node[imgtoken] (pi2) at (4.2, 1.2) {I};

\node[label, anchor=east] at (0.9, 1.2) {Input:};

\node[adapter, fill=blue!20] (tadapt) at (1.2, 0) {Text};
\node[adapter, fill=orange!30] (iadapt) at (2.4, 0) {Image};
\node[adapter, fill=green!20] (aadapt) at (3.6, 0) {Audio};
\node[adapter, fill=purple!20] (vadapt) at (4.8, 0) {Video};

\draw[arrow, blue!60] (pt1.south) -- ([xshift=-2pt]tadapt.north);
\draw[arrow, orange!60] (pi1.south) -- ([xshift=-2pt]iadapt.north);
\draw[arrow, green!60] (pa1.south) -- (aadapt.north);
\draw[arrow, purple!60] (pv1.south) -- (vadapt.north);
\draw[arrow, blue!60] (pt2.south) to[out=-90, in=60] ([xshift=2pt]tadapt.north);
\draw[arrow, orange!60] (pi2.south) to[out=-90, in=60] ([xshift=2pt]iadapt.north);

\node[font=\tiny, green!50!black, align=center] at (3.0, -0.8) {Each token $\to$ specialized adapter\\(optimal for multimodal)};

\draw[dashed, gray] (0, 2.5) -- (0, -1.2);

\end{tikzpicture}
\caption{Per-sequence routing (left) sends all tokens in a sequence to the same adapter, even when tokens have different modalities (T=text, I=image, A=audio, V=video). Per-token routing (right) routes each token to its modality-specialized adapter within a single forward pass.}
\label{fig:routing_comparison}
\end{figure}
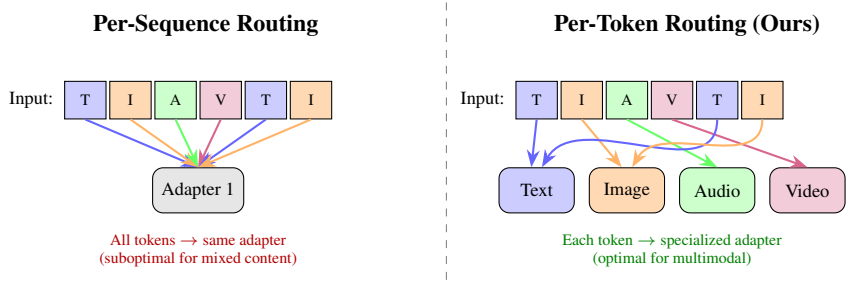

\paragraph{Contributions.} We make three contributions:

\begin{enumerate}
    \item \textbf{Per-token routing framework} (\S\ref{sec:routing}). We formalize per-token routing and prove it is computationally optimal: work $N$ for $N$ tokens versus $K \cdot N$ for per-sequence routing with $K$ adapter types.

    \item \textbf{MoLoRA: Composable specialization} (\S\ref{sec:molora}). We introduce MoLoRA (Mixture of LoRA), which extends per-token routing with learned gating. MoLoRA enables \emph{composable specialization}: load multiple domain-specific adapters and let a learned router select per-token. We demonstrate that \textbf{specialization beats scale}: Qwen3-1.7B + MoLoRA exceeds Qwen3-8B (4.7$\times$ larger) on all four reasoning benchmarks---GSM8K (+14\%), MATH (+8\%), BBH (+2.5\%), GPQA (+2.1\%).

    \item \textbf{Systems and empirical validation} (\S\ref{sec:architecture}--\ref{sec:evaluation}). A hot-set memory architecture enables CUDA graph capture, reducing P99 latency by 67$\times$. Per-token routing achieves $K\times$ improvement for $K$-modality workloads: 4.1$\times$ from pass reduction, compounding to 5.5$\times$ with systems optimizations.
\end{enumerate}

\section{Background and Related Work}
\label{sec:background}

\subsection{Low-Rank Adaptation}

LoRA~\citep{hu2021lora} fine-tunes pretrained weights $W \in \RR^{d \times d}$ via low-rank updates $\Delta W = AB$ where $A \in \RR^{d \times r}$, $B \in \RR^{r \times d}$, and $r \ll d$. The forward pass computes $h = xW + xAB$, adding only $2dr$ parameters per adapted layer. Extensions include QLoRA~\citep{dettmers2023qlora} for quantized training and DoRA~\citep{liu2024dora} for weight decomposition.

\subsection{Multi-Adapter Serving}

S-LoRA~\citep{sheng2024slora} introduced unified paging for adapter weights and KV cache, enabling adapter catalogs that exceed GPU memory. S-LoRA and Punica~\citep{chen2023punica} use specialized kernels optimized for the memory-bound regime of small-batch LoRA computation, achieving high throughput through unified memory pools and LRU eviction. LoRAServe~\citep{jaiswal2025loraserve} addresses rank heterogeneity across adapters via dynamic placement and GPU Direct RDMA, but like prior systems assumes per-sequence routing: $j: [B] \to [K]$ maps sequences to adapters, precluding per-token differentiation.

Orthogonally, work on LoRA \emph{composition} addresses blending adapter effects rather than routing requests: MoLE~\citep{wu2024mixtureloraexperts} learns per-layer gating to weight LoRA outputs, while Multi-LoRA Composition~\citep{zhong2024multilora} proposes LoRA Switch and LoRA Composite for image generation. Recent work explores per-token adapter selection for \emph{model quality}: \citet{feng2023tokenlevel} use cosine similarity to route tokens to adapters for improved generalization, and LLaVA-MoLE~\citep{chen2024llavamole} applies MoE-style routing over LoRA experts during multimodal instruction finetuning. These focus on \emph{training} or \emph{generalization}---our work instead targets \emph{serving efficiency}, introducing systems optimizations (CUDA graphs, hot-set memory) absent from prior per-token approaches.

\subsection{Mixture of Experts}

MoE architectures~\citep{shazeer2017outrageously,fedus2022switch,lepikhin2020gshard} route tokens to specialized expert networks via learned gating functions. Existing multi-adapter systems route at the sequence level, making them fundamentally different from MoE: one routing decision per sequence versus one per token. Per-token routing bridges this gap---it makes adapter dispatch identical to MoE dispatch at the infrastructure level:

\begin{center}
\begin{tabular}{lll}
\toprule
\textbf{Component} & \textbf{MoE} & \textbf{Per-Token Adapters} \\
\midrule
Routing granularity & Per-token & Per-token \\
Routing function & Learned $g_\theta(x)$ & Deterministic $\cR_\text{vocab}(v)$ \\
Post-routing & Histogram + scatter-gather & Histogram + scatter-gather \\
\bottomrule
\end{tabular}
\end{center}

This structural equivalence has concrete benefits. We adopt \emph{adaptive tiling} from MoE systems~\citep{gale2023megablocks}: the histogram of tokens per target determines tile sizes in the compute kernel, with small groups using smaller tiles and large groups using larger tiles for better memory bandwidth utilization. More broadly, the dispatch kernel is target-agnostic---optimizations transfer bidirectionally between MoE and multi-adapter systems.

\subsection{Multimodal LLMs and Vocabulary Structure}

Unified multimodal architectures such as Chameleon~\citep{team2024chameleon} process multiple modalities through shared transformer backbones with unified vocabularies. These models encode modality information directly in token indices: text tokens occupy one vocabulary range, image tokens another, and so forth. We exploit this structure for per-token routing: a comparison against vocabulary boundaries determines the appropriate adapter for each token.

\section{Per-Token Routing Framework}
\label{sec:routing}

We develop a framework for per-token adapter routing, establishing its relationship to per-sequence routing and mixture-of-experts.

\subsection{From Sequences to Tokens}

Let $\cX = \{x_1, \ldots, x_N\}$ denote a batch of $N$ tokens with associated vocabulary indices $v_i \in [V]$ and sequence memberships $s_i \in [B]$. Per-sequence routing computes $r(i) = j(s_i)$ where $j: [B] \to [K]$ assigns adapters to sequences. This constrains all tokens in a sequence to use the same adapter.

Per-token routing removes this constraint by defining $r: [N] \to [C]$ directly on tokens. The key insight is that multimodal vocabularies encode routing information:

\begin{definition}[Vocabulary Routing]
\label{def:vocab_routing}
Given vocabulary breaks $\mathbf{b} = (b_0, b_1, \ldots, b_M)$ with $b_0 = 0$ and $b_M = V$, the vocabulary routing function is:
\begin{equation}
    \cR_\text{vocab}(v) = m \iff v \in [b_{m-1}, b_m)
\end{equation}
\end{definition}

For a multimodal model with $M$ modalities~\citep{team2024chameleon}, tokens route to adapters based on their vocabulary range: text tokens to a text adapter, image tokens to an image adapter, audio to audio, video to video, and so forth. The routing decision requires only $M-1$ integer comparisons.

\paragraph{Assumption.} This formulation assumes modalities occupy contiguous vocabulary ranges, as in unified multimodal tokenizers. Non-contiguous assignments require an $O(V)$ lookup table; we focus on the contiguous case which covers Chameleon-family models and similar architectures.

\subsection{Compositional Routing}

Per-token routing enables routing over \emph{product spaces}, providing exponential expressiveness from independently trained components.

\begin{definition}[Compositional Routing]
\label{def:compositional}
Given adapter set $\cA$ and modality set $\cM$, compositional routing defines targets $\cC = \cA \times \cM$ with routing function:
\begin{equation}
    r(i) = (a_i, m_i) \in \cA \times \cM
\end{equation}
where $a_i$ is determined by request metadata (e.g., customer identity) and $m_i = \cR_\text{vocab}(v_i)$ is determined by vocabulary structure.
\end{definition}

The composite target $c = a \cdot |\cM| + m$ indexes into a weight tensor of shape $(|\cA|, |\cM|, d, r)$, providing $|\cA| \cdot |\cM|$ distinct computational paths with a single dispatch operation. The key advantage is \emph{unified serving}: rather than deploying $|\cA| \cdot |\cM|$ separate model instances, a single deployment handles all combinations through compositional indexing.

\begin{example}
With 64 customer adapters and 4 modalities, compositional routing provides 256 computational paths from a single $(64, 4, d, r)$ weight tensor, served through unified infrastructure rather than 256 separate deployments.
\end{example}

The framework extends naturally to additional factors. Token-level specialization $\cT$ (e.g., routing special tokens differently) yields product space $\cA \times \cM \times \cT$.

\subsection{Unified Dispatch Infrastructure}
\label{sec:dispatch}

The dispatch kernel that routes tokens to targets in a LoRA setting is equivalent to that of routing tokens to MoE experts. Given routing decisions $r \in [C]^N$, the kernel:

\begin{enumerate}
    \item Constructs a histogram $h \in \ZZ^C$ counting tokens per target
    \item Allocates positions via atomic increment: $\text{pos} \gets \text{atomicAdd}(h[r[t]], 1)$
    \item Emits pointer arrays $\mathbf{xs}[c, \text{pos}], \mathbf{ys}[c, \text{pos}]$ for scatter-gather
\end{enumerate}

This three-step pattern is target-agnostic: the kernel operates on abstract routing decisions without knowledge of whether targets represent experts, adapters, or compositional combinations thereof. The histogram enables adaptive tiling in subsequent compute kernels---targets with few tokens use smaller tiles while targets with many tokens use larger tiles for better memory bandwidth utilization (Appendix~\ref{app:dispatch}).

For compositional routing, the kernel computes composite targets by combining adapter indices from request metadata with modality indices from vocabulary structure (Algorithm~\ref{alg:compositional_gate}).

\begin{algorithm}[t]
\caption{Compositional Gating Kernel}
\label{alg:compositional_gate}
\begin{algorithmic}[1]
\Require Token features $X$, vocab indices $V$, adapter indices $A$, modality breaks $\mathbf{b}$
\Ensure Histogram $h \in \ZZ^{|\cA| \times |\cM|}$, row pointers $\mathbf{xs}, \mathbf{ys}$
\State $h \gets \mathbf{0}_{|\cA| \times |\cM|}$
\For{token $t = 1, \ldots, N$ \textbf{in parallel}}
    \State $a \gets A[t]$ \Comment{Adapter from request metadata}
    \State $m \gets \text{FindModality}(V[t], \mathbf{b})$ \Comment{Modality from vocabulary}
    \State $c \gets a \cdot |\cM| + m$ \Comment{Composite target}
    \State $\text{pos} \gets \text{atomicAdd}(h[c], 1)$
    \State $\mathbf{xs}[c, \text{pos}] \gets \&X[t]$; $\mathbf{ys}[c, \text{pos}] \gets \&Y[t]$
\EndFor
\end{algorithmic}
\end{algorithm}

\subsection{Theoretical Analysis}

\begin{theorem}[Routing Complexity]
\label{thm:complexity}
Vocabulary routing achieves $\cO(1)$ per-token routing cost for fixed $M$, compared to $\cO(E \cdot d)$ for learned MoE gating over $E$ experts with hidden dimension $d$.
\end{theorem}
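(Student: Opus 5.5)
The plan is to bound the cost of each routing scheme separately in a word-RAM / arithmetic-operation cost model, where comparing two integers or performing one multiply-add costs $\cO(1)$. The two bounds are then juxtaposed.

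\emph{Vocabulary routing.} By Definition~\ref{def:vocab_routing}, computing $\cR_\text{vocab}(v)$ means locating the unique $m$ with $v \in [b_{m-1}, b_m)$. Because the breaks $\mathbf{b}$ are sorted and $b_0 = 0$, $b_M = V$, the following linear scan suffices: compare $v$ against $b_1, \ldots, b_{M-1}$ in order and return the first $m$ with $v < b_m$, or $M$ if there is none. This uses at most $M-1$ integer comparisons, matching the count stated after Definition~\ref{def:vocab_routing}. It reads only the scalar index $v_i$ and never the hidden state $x_i \in \RR^d$, so the cost is independent of $d$, $N$, and $E$. For fixed $M$, $M-1 = \cO(1)$.

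Binary search would give $\cO(\log M)$, which I will mention only as a remark, since the statement fixes $M$. For the compositional case (Definition~\ref{def:compositional} and Algorithm~\ref{alg:compositional_gate}), the one extra step is $c = a\cdot|\cM| + m$. That is one multiply and one add, and $a$ is read from metadata. The per-token cost therefore stays $\cO(1)$, and the result extends to compositional routing.

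\emph{Learned MoE gating.} I will use the standard gate $g_\theta(x) = \operatorname{softmax}(x W_g)$ with $W_g \in \RR^{d\times E}$, followed by top-$k$ selection. Computing the logits $x W_g$ takes $E$ inner products of length $d$, which is $dE$ multiply-adds. The softmax and the selection over $E$ logits cost $\cO(E)$, or $\cO(E\log k)$ for top-$k$ with $k$ fixed. The total is $\cO(E\cdot d)$.

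\textbf{Main obstacle.} The obstacle is interpretive rather than technical. The $\cO(E\cdot d)$ figure is an upper bound for the canonical linear gate, not a lower bound for every conceivable learned router. I would make the model class explicit by saying "learned MoE gating" means a dense linear gate over the hidden state. I would then add a brief justification that this is essentially tight for that class: each of the $E$ logits depends on all $d$ coordinates of $x$ through a generic $W_g$, so reading and combining them costs $\Omega(d)$ per logit.

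I would not attempt a general lower bound over arbitrary learned routers, which is false in general; for example, hashing-based routers are cheaper. With that caveat the statement follows immediately.
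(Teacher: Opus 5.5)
Your proposal is correct and takes the same approach as the paper, which likewise counts the $M-1$ integer comparisons needed to evaluate $\cR_\text{vocab}$ and the $E\cdot d$ multiplications needed for the linear gate $\mathrm{softmax}(Wx+b)$ with $W\in\RR^{E\times d}$. The paper leaves three things implicit that you make explicit: the cost model, the extension to compositional routing, and the caveat that $\cO(E\cdot d)$ is an upper bound for the dense linear gate rather than a lower bound over all learned routers.
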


Vocabulary routing requires $M-1$ integer comparisons, which are unrolled at compile time for typical $M \leq 4$. Learned gating computes $\text{softmax}(Wx + b)$ where $W \in \RR^{E \times d}$, requiring $E \cdot d$ multiplications. For $d = 4096$ and $E = 8$, this represents a $\sim$10,000$\times$ difference in routing overhead.

\begin{theorem}[Expressiveness]
\label{thm:expressiveness}
When the optimal routing function $\cR^*$ satisfies $\cR^*(x) = m$ iff $v(x) \in [b_{m-1}, b_m)$, vocabulary routing achieves $\cR_\text{vocab} = \cR^*$.
\end{theorem}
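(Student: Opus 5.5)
The plan is to show that $\cR_\text{vocab}$ and $\cR^*$ agree on every token. Both functions send a token $x$ to a modality index in $[M]$, so it is enough to check pointwise equality. The argument unfolds Definition~\ref{def:vocab_routing} and compares it with the hypothesis on $\cR^*$. There is one preliminary fact: the vocabulary breaks $\mathbf{b}$ split $[0,V)$ into the disjoint half-open intervals $[b_{m-1},b_m)$, $m=1,\ldots,M$. This uses $b_0=0$ and $b_M=V$, and requires the breaks to be nondecreasing, which is the contiguity assumption stated after Definition~\ref{def:vocab_routing}. As a result, every vocabulary index $v\in[V]$ lies in exactly one interval. So $\cR_\text{vocab}$ is a well-defined total function on $[V]$, and the ``iff'' in its definition determines a unique value.

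The main step is then a direct comparison. Fix any token $x$ with vocabulary index $v(x)$, and let $m$ be the unique index with $v(x)\in[b_{m-1},b_m)$. By Definition~\ref{def:vocab_routing}, $\cR_\text{vocab}(v(x))=m$. By the hypothesis of Theorem~\ref{thm:expressiveness}, $\cR^*(x)=m$ holds if and only if $v(x)\in[b_{m-1},b_m)$, so $\cR^*(x)=m$ as well. For the other direction, suppose $\cR^*(x)=m'$ for some $m'$. The hypothesis then forces $v(x)\in[b_{m'-1},b_{m'})$, and disjointness gives $m'=m$. Since $x$ was arbitrary, $\cR_\text{vocab}\circ v=\cR^*$, which is the claimed identity $\cR_\text{vocab}=\cR^*$ once $\cR_\text{vocab}$ is read as acting on tokens through their vocabulary index.

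The argument is essentially definitional. The only point needing care is making the interpretation precise. First, $\cR^*$ is a function on tokens, while $\cR_\text{vocab}$ is a function on indices, so the equality has to be stated as $\cR^*=\cR_\text{vocab}\circ v$. Second, the hypothesis must use the same breaks $\mathbf{b}$ that the vocabulary router uses. Third, the well-definedness of both sides rests on the contiguity and monotonicity assumption on $\mathbf{b}$. I would state these explicitly at the start. If the breaks were not monotone, the intervals could overlap or fail to cover $[V]$, and uniqueness would fail; this is exactly the non-contiguous case the paper sets aside, where an $O(V)$ lookup table would be used instead.
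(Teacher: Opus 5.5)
Your proposal is correct and takes the same route as the paper, which justifies the theorem only by the one-line observation that the hypothesis on $\cR^*$ is verbatim the defining condition of $\cR_\text{vocab}$ in Definition~\ref{def:vocab_routing}. Your extra care makes explicit what the paper leaves implicit: you check that nondecreasing breaks with $b_0=0$ and $b_M=V$ partition $[0,V)$, so both maps are well defined, and you read the conclusion as $\cR^*=\cR_\text{vocab}\circ v$.
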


When vocabulary structure determines optimal routing---as in multimodal models where modality determines the appropriate adapter---deterministic routing matches learned routing without parameter overhead.

\subsection{Key Theoretical Results}

We establish two results that distinguish per-token from per-sequence routing: computational optimality and a unification with sparse attention.

\begin{theorem}[Computational Optimality]
\label{thm:optimality}
Consider a batch of $N$ tokens with arbitrary modality assignments processed by $K$ adapters. Any correct routing strategy requires work $W$ satisfying:
\begin{align}
    W_\text{per-seq} &\geq K \cdot N \cdot c_\text{pass} \quad \text{(per-sequence)} \\
    W_\text{per-tok} &= N \cdot c_\text{pass} \quad \text{(per-token)}
\end{align}
where $c_\text{pass}$ is the cost of processing one token through one adapter. Per-token routing achieves the minimum work $N \cdot c_\text{pass}$.
\end{theorem}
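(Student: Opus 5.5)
The plan is to fix a simple cost model and then count passes. A \emph{pass} is one execution of the base model together with exactly one adapter configuration over a set of tokens. Processing token $i$ through adapter $k$ costs $c_\text{pass}$. A routing strategy is \emph{correct} if every token $i$ receives the output $x_iW + x_iA_{m_i}B_{m_i}$, where $m_i$ is its modality-determined adapter (via $\cR_\text{vocab}$, Definition~\ref{def:vocab_routing}).

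The per-token equality follows from Algorithm~\ref{alg:compositional_gate} and the dispatch of \S\ref{sec:dispatch}. Each token is placed in exactly one histogram bucket $r(i)$ and is processed once, through $A_{r(i)}B_{r(i)}$. Since the buckets partition $[N]$, the total work is $\sum_{c}h[c]\,c_\text{pass} = N c_\text{pass}$. This is also a lower bound for any correct strategy, because each of the $N$ tokens must be processed through at least one adapter to obtain its output. Hence $N c_\text{pass}$ is the minimum, and per-token routing attains it.

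For the per-sequence lower bound, I would use the constraint $r(i)=j(s_i)$. Within a single pass every token of a sequence sees the same adapter. Suppose the batch has arbitrary modality assignments, so that in the worst case every sequence contains tokens of all $K$ modalities. Then a correct per-sequence strategy must run each sequence once under each of the $K$ adapters, since no single pass can serve two tokens of the same sequence that need different adapters. Because a sequence cannot be split inside a per-sequence pass, each such pass processes the entire sequence. Summing over the $K$ passes and all sequences gives $W_\text{per-seq} \ge K\cdot N\cdot c_\text{pass}$.

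The main obstacle is making the per-sequence claim rigorous, because it is only a worst-case statement. If a sequence happens to contain only one modality, a single pass suffices for it. I would therefore state the bound over the adversarial (or generic interleaved) assignment in which all $K$ modalities appear in every sequence, which is how ``arbitrary'' should be read. The bound also requires the convention that a per-sequence pass cannot drop tokens: a causal forward pass needs the full prefix, so the whole sequence is processed even when only some outputs are kept. I would state this convention explicitly as part of the cost model. Splitting sequences into sub-sequences would break the per-sequence model and effectively amount to per-token routing.
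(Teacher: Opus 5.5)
The paper gives no proof of this theorem. Its only supporting reasoning is the remark in \S\ref{sec:core\_result} that interleaved sequences force $K$ full passes under per-sequence routing versus one pass under per-token routing. Your argument is a faithful formalization of exactly that count, and your per-token half (the histogram partition gives $\sum_c h[c] = N$) is fine.

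The two caveats you raise are not optional. Read literally, with ``arbitrary'' meaning ``for every assignment'', the per-sequence bound is false. In the paper's own ``separated'' setting of Table~\ref{tab:distribution}, each sequence is single-modality, one pass per sequence suffices, and the work is $N c_\text{pass}$. Likewise, once splitting is allowed the bound collapses to $N c_\text{pass}$, and the paper's interleaved baseline in that same table does split at boundaries. You can state your result more sharply. Under no-splitting, the minimal per-sequence work is exactly $\sum_s k_s |s|\, c_\text{pass}$, where $k_s$ is the number of distinct modalities in sequence $s$. This equals $K N c_\text{pass}$ precisely when every sequence contains all $K$ modalities, which is its maximum over assignments.

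The step that does not hold as written is your causal-prefix justification of the no-split convention. Under your own correctness criterion $h_i = x_iW + x_iA_{m_i}B_{m_i}$, outputs are computed token by token, so nothing forces a pass to touch the whole sequence. Under a multi-layer notion of correctness, the $K$ whole-sequence passes are not correct at all. In pass $k$, every token $j$ with $m_j \neq k$ is run through adapter $k$, and through attention its wrong hidden states corrupt the later-layer inputs of the modality-$k$ tokens you keep. The correct one-adapter-per-pass strategy in that setting processes the segments in order with a KV cache, switching adapters at modality boundaries. It costs $N c_\text{pass}$, and it is exactly the splitting your convention excludes.

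So keep the single-layer correctness criterion and state no-splitting as a bare stipulation of the cost model, without the causal rationale. The per-sequence bound is then essentially definitional, which is the most this theorem can honestly claim.
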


\begin{theorem}[Sparse Attention Equivalence]
\label{thm:sparse_attention}
Per-token adapter routing with $K$ adapters and vocabulary routing function $\cR_\text{vocab}$ is equivalent to sparse attention with a block-diagonal attention pattern, where the vocabulary partitioning determines the block structure.
\end{theorem}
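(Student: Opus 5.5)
We prove Theorem~\ref{thm:sparse_attention} by building an explicit correspondence. Each token $i$ is viewed as a \emph{query}, and each target $m \in [M]$ is represented by a \emph{key/value} slot that holds the adapter pair $(A_m, B_m)$. We then show that the per-token update in Equation~\ref{eq:pertok} equals the output of an attention operator whose mask is fixed by $\cR_\text{vocab}$.

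First, define a hard (0/1) attention matrix $P \in \{0,1\}^{N \times K}$ by $P_{i m} = 1$ iff $v_i \in [b_{m-1}, b_m)$, i.e.\ iff $\cR_\text{vocab}(v_i) = m$ (Definition~\ref{def:vocab_routing}). Because the breaks partition $[V]$, each row of $P$ contains exactly one nonzero entry. Equivalently, $P$ is the limit of a softmax whose logits are $0$ on the allowed slot and $-\infty$ elsewhere. This is exactly a sparse attention mask.

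Second, write the adapter output as a mixture over slots:
\[
x_i A_{r(i)} B_{r(i)} = \sum_{m} P_{im}\, x_i A_m B_m .
\]
Each slot's ``value'' is thus the token-conditioned map $x \mapsto x A_m B_m$, so $h_i = x_i W + \sum_m P_{im}(x_i A_m B_m)$.

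Third, expose the block structure. Permute the tokens by their routing target, which is the same sort performed by the histogram/scatter step of the dispatch kernel (\S\ref{sec:dispatch}). After this permutation, $P$ (or, in token-to-token form, $PP^\top$) becomes block-diagonal. Block $m$ contains exactly the tokens whose vocabulary index lies in $[b_{m-1}, b_m)$, and its size is the histogram count $h[m]$. Hence the vocabulary partition determines the block structure.

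The converse direction also holds. Given any block-diagonal mask whose blocks are induced by a contiguous vocabulary partition, reading off each token's block recovers $\cR_\text{vocab}$ with the same breaks $\mathbf{b}$, so the correspondence is a bijection.

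The main obstacle is making ``equivalent'' precise. Standard attention is bilinear in queries and keys and its values are vectors, whereas here the values are linear maps applied to $x_i$. My first approach would be to state equivalence at the level of the mask/dispatch pattern: both computations have the same support, the same sparsity, and the same gather/compute/scatter schedule. Separately, I would note that the values can be vectorized by treating $x_i \otimes$ slot-$m$ as the value. If a literal token-to-token attention statement is wanted, I would use $PP^\top$ as the token-by-token mask. This mask is block-diagonal with all-ones blocks, which connects it directly to block-sparse attention kernels.
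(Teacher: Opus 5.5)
Your proof is correct in substance but takes a different route from the paper's sketch. The paper works token-to-token. It sets $\alpha_{ij} = \mathbf{1}[\cR_\text{vocab}(v_i) = \cR_\text{vocab}(v_j)]$, notes that this $N \times N$ pattern is block-diagonal by modality, and writes the adapter output as $\sum_j \alpha_{ij}\, x_j A_{\cR_\text{vocab}(v_j)} B_{\cR_\text{vocab}(v_j)}$. You instead take the $M$ adapter slots as keys, so the weights form the one-hot $N \times M$ matrix $P$. The identity $x_i A_{r(i)} B_{r(i)} = \sum_m P_{im}\, x_i A_m B_m$ then holds exactly, and the block structure comes from sorting rows by target, as the dispatch histogram does.

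Your choice buys an identity that is actually true. The paper's sum evaluates to $\bigl(\sum_{j:\, m(j) = m(i)} x_j\bigr) A_{m(i)} B_{m(i)}$, which mixes every same-modality token into token $i$'s output. It equals $x_i A_{m(i)} B_{m(i)}$ only when each modality group is a singleton, and row-normalizing just gives the group mean.

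The paper's choice buys a literal square block-diagonal attention matrix, which is what the sparse-attention literature means by a block-diagonal pattern. In your setting that object is $PP^\top$, and you were right to use it only as a support mask. If you promote it to attention weights over values $x_j A_{m(j)} B_{m(j)}$, you reproduce the paper's error, because the weights that recover the adapter output inside that mask are $\delta_{ij}$. The precise token-to-token statement is therefore ``diagonal weights supported inside a vocabulary-determined block-diagonal mask,'' which matches your mask/dispatch-level notion of equivalence.

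Your obstacle about query-dependent values also disappears if slot $m$ carries the value $A_m B_m$ itself and you apply $x_i$ after aggregation, since $x_i \sum_m P_{im} A_m B_m = \sum_m P_{im}\, x_i A_m B_m$. Finally, declaring $P \in \{0,1\}^{N \times K}$ while indexing it by the $M$ vocabulary ranges tacitly assumes $K = M$; say so explicitly.
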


\begin{proof}[Proof sketch]
Define attention weights $\alpha_{ij} = \mathbf{1}[\cR_\text{vocab}(v_i) = \cR_\text{vocab}(v_j)]$. The resulting attention pattern is block-diagonal with blocks corresponding to modality groups. The adapter computation $x_i A_{m(i)} B_{m(i)}$ can be written as:
\begin{equation}
    \sum_{j} \alpha_{ij} \cdot x_j A_{\cR_\text{vocab}(v_j)} B_{\cR_\text{vocab}(v_j)}
\end{equation}
which is a sparse attention operation where each token attends only to tokens of the same modality, using modality-specific projections.
\end{proof}

\paragraph{Implications.} This equivalence unifies per-token adapter routing with the broader sparse attention literature. Optimizations developed for sparse attention (e.g., block-sparse patterns, hardware-efficient implementations) directly apply to adapter routing, and vice versa. The block-diagonal structure enables efficient parallel execution: each modality group can be processed independently, achieving perfect load balancing when group sizes are similar.

\section{MoLoRA}
\label{sec:molora}

We now present \textbf{MoLoRA (Mixture of LoRA)}, which extends per-token routing with learned gating to enable \emph{composable specialization}---the ability to load multiple specialized LoRA adapters simultaneously and route tokens dynamically based on content.

\subsection{Motivation}

Traditional adapter serving requires choosing a single adapter per request: a math LoRA for mathematical reasoning, a code LoRA for programming, a creative LoRA for writing. However, real-world requests often require multiple capabilities. A request like ``write Python code to solve this differential equation'' needs both mathematical and programming expertise. Per-sequence routing forces a choice, accepting suboptimal quality on one capability.

Per-token routing with learned gating eliminates this trade-off. By loading multiple specialized adapters and routing per-token, a single serving endpoint achieves the quality benefits of all fine-tunes combined:
\begin{itemize}[nosep,leftmargin=*]
    \item \textbf{Single-domain tasks}: The router selects the specialized adapter, matching single-adapter quality
    \item \textbf{Mixed-capability tasks}: The router selects different adapters for different tokens, combining expertise within a single sequence
\end{itemize}

This enables \emph{modular expertise at inference time}: train focused LoRAs independently, combine them without retraining, and add new capabilities by loading new adapters.

\subsection{Limitations of Vocabulary Routing}
\label{sec:molora_motivation}

Deterministic vocabulary routing requires modality information encoded in token indices. We identify four scenarios where this assumption fails, motivating learned routing.

\paragraph{Scenario 1: Encoder-Based Multimodal Models.}
Models like LLaVA~\citep{liu2023visual}, Flamingo~\citep{alayrac2022flamingo}, and Qwen-VL~\citep{bai2023qwen} process images through separate encoders (e.g., CLIP~\citep{radford2021learning}) before projecting into the LLM's embedding space. The resulting ``image tokens'' occupy the \emph{same vocabulary range} as text tokens. Unlike Chameleon's disjoint ranges, these models provide no vocabulary-level signal distinguishing modalities.

\paragraph{Scenario 2: Semantic Specialization.}
Consider adapters specialized for code, mathematical reasoning, creative writing, and technical documentation. These domains share the same vocabulary---the word ``function'' appears in all four contexts with identical token IDs. Yet optimal adaptation differs: code adapters should emphasize syntax patterns, math adapters logical structure. Vocabulary-based routing cannot distinguish these cases.

\paragraph{Scenario 3: Sub-Modality Granularity.}
Even when vocabulary encodes modality, finer-grained specialization may be valuable. Within image tokens, photographs, diagrams, and charts may benefit from different adapters. Vocabulary routing provides only coarse modality-level grouping; learned routing enables arbitrary granularity.

\paragraph{Scenario 4: Compositional Multi-Attribute Routing.}
Production settings often require routing along multiple dimensions: modality $\times$ domain $\times$ task. Vocabulary routing handles only one dimension. Learned routing naturally extends to product spaces by predicting multi-dimensional routing targets.

\subsection{Router Architecture}

Given input $x \in \RR^{B \times L \times d}$, the router $g_\theta: \RR^d \to \RR^K$ produces adapter logits per token. We apply top-$k$ selection followed by softmax:
\begin{equation}
    w_{i} = \text{softmax}(\text{TopK}(g_\theta(x_i), k)), \quad
    \Delta h_i = \sum_{j \in \text{TopK}} w_{i,j} \cdot x_i A^{(j)} B^{(j)}
\end{equation}

The router is a 2-layer MLP with hidden dimension 64 and GELU activation:
\begin{equation}
    g_\theta(x) = W_2 \cdot \text{GELU}(W_1 x + b_1) + b_2
\end{equation}
This adds minimal parameters ($64d + 64K$) while enabling input-dependent adapter selection. Following Switch Transformer~\citep{fedus2022switch}, we add an auxiliary load-balancing loss to encourage uniform adapter utilization.

\begin{figure}[t]
\centering
\begin{tikzpicture}[
    box/.style={rectangle, draw, rounded corners, minimum width=1.4cm, minimum height=0.5cm, font=\scriptsize, align=center},
    token/.style={rectangle, draw, minimum width=0.6cm, minimum height=0.5cm, font=\scriptsize},
    adapter/.style={rectangle, draw, minimum width=0.9cm, minimum height=0.5cm, font=\scriptsize, align=center},
    router/.style={rectangle, draw, fill=yellow!20, rounded corners, minimum width=1.4cm, minimum height=0.5cm, font=\scriptsize},
    myarrow/.style={-Stealth, thick},
    grayarrow/.style={-Stealth, gray, thick},
]

\node[font=\small\bfseries] at (-4.7, 3.2) {Logical View (Per-Token)};

\node[token, fill=blue!25] (tok) at (-4.7, 2.3) {$x_i$};

\node[router] (router) at (-4.7, 1.3) {Router $g_\theta$};
\draw[myarrow] (tok.south) -- (router.north);

\node[box, fill=orange!10] (topk) at (-4.7, 0.3) {Top-$k$};
\draw[myarrow] (router.south) -- (topk.north);

\node[adapter, fill=blue!20] (a1) at (-6.35, -1.0) {$A_1$};
\node[adapter, fill=gray!15] (a2) at (-5.35, -1.0) {$A_2$};
\node[adapter, fill=blue!20] (a3) at (-4.05, -1.0) {$A_3$};
\node[adapter, fill=gray!15] (a4) at (-3.05, -1.0) {$A_4$};

\draw[myarrow, blue!70, thick] (topk.south) -- node[midway, sloped, font=\tiny, fill=white, inner sep=2pt] {$w_1$} (a1.north);
\draw[myarrow, blue!70, thick] (topk.south) -- node[midway, sloped, font=\tiny, fill=white, inner sep=2pt] {$w_3$} (a3.north);

\draw[gray, dashed] (topk.south) -- node[pos=1.0, font=\scriptsize, fill=white, inner sep=0.5pt] {$\times$} ($(topk.south)!0.59!(a2.north)$);
\draw[gray, dashed] (topk.south) -- node[pos=1.0, font=\scriptsize, fill=white, inner sep=0.5pt] {$\times$} ($(topk.south)!0.59!(a4.north)$);

\node[box, fill=green!15] (sum) at (-4.7, -2.1) {$\sum w_j$};
\draw[myarrow, blue!70] (a1.south) -- (sum.north);
\draw[myarrow, blue!70] (a3.south) -- (sum.north);

\node[box, fill=gray!20] (out) at (-4.7, -3.1) {$\Delta h_i$};
\draw[myarrow] (sum.south) -- (out.north);

\node[font=\tiny, align=center] at (-4.7, -3.9) {$\Delta h_i = \sum_{j \in \text{TopK}} w_{ij} \cdot x_i A^{(j)} B^{(j)}$};

\draw[dashed, gray, thick] (-1.5, 3.5) -- (-1.5, -4.2);

\node[font=\small\bfseries] at (3.3, 3.2) {Physical View (Grouped Dispatch)};

\node[token, fill=blue!25] (t1) at (0.8, 2.3) {$x_1$};
\node[token, fill=green!25] (t2) at (1.8, 2.3) {$x_2$};
\node[token, fill=blue!25] (t3) at (2.8, 2.3) {$x_3$};
\node[token, fill=orange!25] (t4) at (3.8, 2.3) {$x_4$};
\node[token, fill=blue!25] (t5) at (4.8, 2.3) {$x_5$};
\node[token, fill=green!25] (t6) at (5.8, 2.3) {$x_6$};

\node[font=\tiny, blue!70] at (0.8, 2.8) {$\rightarrow$1};
\node[font=\tiny, green!60!black] at (1.8, 2.8) {$\rightarrow$2};
\node[font=\tiny, blue!70] at (2.8, 2.8) {$\rightarrow$1};
\node[font=\tiny, orange!70] at (3.8, 2.8) {$\rightarrow$3};
\node[font=\tiny, blue!70] at (4.8, 2.8) {$\rightarrow$1};
\node[font=\tiny, green!60!black] at (5.8, 2.8) {$\rightarrow$2};

\node[router, minimum width=5.5cm] (router2) at (3.3, 1.3) {Router $g_\theta$ (batch)};
\draw[grayarrow] (t1.south) -- ([xshift=-2.2cm]router2.north);
\draw[grayarrow] (t2.south) -- ([xshift=-1.3cm]router2.north);
\draw[grayarrow] (t3.south) -- ([xshift=-0.4cm]router2.north);
\draw[grayarrow] (t4.south) -- ([xshift=0.5cm]router2.north);
\draw[grayarrow] (t5.south) -- ([xshift=1.4cm]router2.north);
\draw[grayarrow] (t6.south) -- ([xshift=2.3cm]router2.north);

\node[box, fill=orange!10, minimum width=5.5cm] (hist) at (3.3, 0.3) {Histogram \& Group by Adapter};
\draw[myarrow] (router2.south) -- (hist.north);

\node[adapter, fill=blue!20] (g1) at (1.3, -0.8) {$A_1$};
\node[adapter, fill=green!20] (g2) at (3.3, -0.8) {$A_2$};
\node[adapter, fill=orange!20] (g3) at (5.3, -0.8) {$A_3$};

\draw[myarrow, blue!70] (hist.south) -- (g1.north);
\draw[myarrow, green!60!black] (hist.south) -- (g2.north);
\draw[myarrow, orange!70] (hist.south) -- (g3.north);

\node[box, fill=purple!15, minimum width=5.5cm] (gemm) at (3.3, -2.2) {Grouped GEMM (parallel)};
\draw[myarrow] (g1.south) -- ([xshift=-1.5cm]gemm.north);
\draw[myarrow] (g2.south) -- (gemm.north);
\draw[myarrow] (g3.south) -- ([xshift=1.5cm]gemm.north);

\node[font=\tiny, fill=white, inner sep=2pt] at (1.3, -1.35) {$x_1, x_3, x_5$};
\node[font=\tiny, fill=white, inner sep=2pt] at (3.3, -1.35) {$x_2, x_6$};
\node[font=\tiny, fill=white, inner sep=2pt] at (5.3, -1.35) {$x_4$};

\node[box, fill=gray!20, minimum width=5.5cm] (scatter) at (3.3, -3.2) {Scatter to original positions};
\draw[myarrow] (gemm.south) -- (scatter.north);

\node[font=\tiny, gray, align=center] at (3.3, -3.9) {Same infrastructure as MoE dispatch};

\end{tikzpicture}
\caption{\textbf{MoLoRA: logical vs.\ physical execution.} \textbf{Left:} Per-token view---a router selects top-$k$ adapters (here $k$=2), which are combined via weighted sum. \textbf{Right:} Batched execution---tokens are grouped by their selected adapter, enabling parallel grouped GEMM using identical infrastructure to MoE systems.}
\label{fig:molora}
\end{figure}
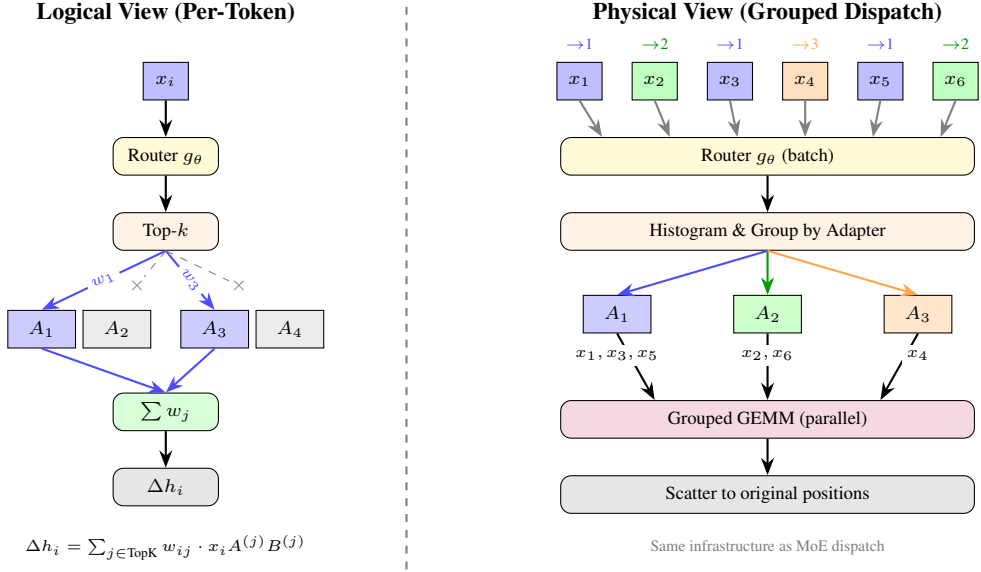

\subsection{Composable Specialization Results}
\label{sec:composable}

We evaluate the central claim: MoLoRA enables composable specialization, where multiple domain-specific adapters combine at inference time to match specialized performance across all domains.

\paragraph{Setup.} We use \textbf{Qwen3-1.7B} as the base model and train four specialized LoRA adapters (rank=32) using GRPO~\citep{shao2024deepseekmath} on a filtered general-purpose reasoning corpus spanning different domains: math, logical reasoning, and scientific reasoning. A lightweight router (2-layer MLP) learns to classify tokens and select the appropriate adapter. We evaluate on the standard test splits of GSM8K, MATH, BBH, and GPQA, comparing against \textbf{Qwen3-8B} (4.7$\times$ larger).

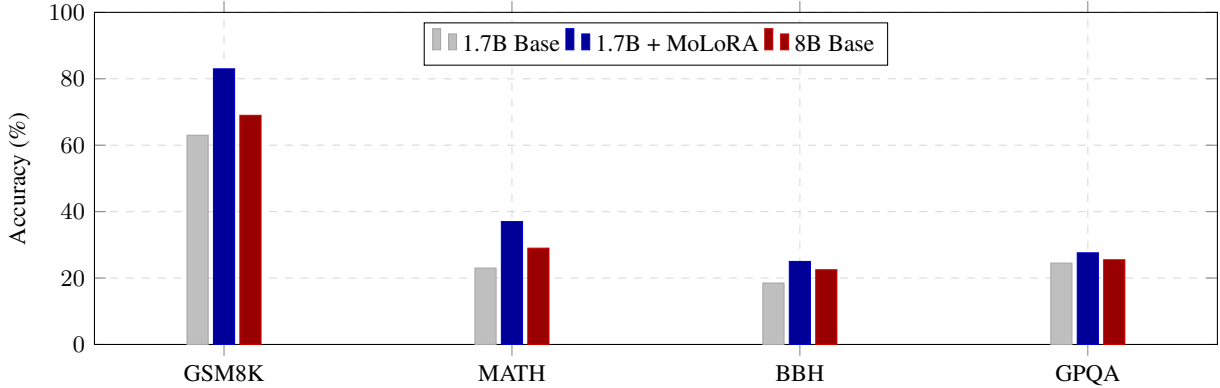
\begin{figure}[t]
\centering
\begin{tikzpicture}
\begin{axis}[
    width=\columnwidth,
    height=6cm,
    ybar,
    bar width=8pt,
    ylabel={Accuracy (\%)},
    symbolic x coords={GSM8K, MATH, BBH, GPQA},
    xtick=data,
    ymin=0, ymax=100,
    ytick={0, 20, 40, 60, 80, 100},
    grid=major,
    grid style={dashed, gray!30},
    legend style={at={(0.5,0.97)}, anchor=north, legend columns=3, font=\small},
    tick label style={font=\small},
    label style={font=\small},
    enlarge x limits=0.15,
]
\addplot[fill=gray!50, draw=gray!70] coordinates {
    (GSM8K, 63.0)
    (MATH, 23.0)
    (BBH, 18.5)
    (GPQA, 24.5)
};
\addlegendentry{1.7B Base}

\addplot[fill=blue!60!black, draw=blue!80!black] coordinates {
    (GSM8K, 83.0)
    (MATH, 37.0)
    (BBH, 25.0)
    (GPQA, 27.6)
};
\addlegendentry{1.7B + MoLoRA}

\addplot[fill=red!60!black, draw=red!80!black] coordinates {
    (GSM8K, 69.0)
    (MATH, 29.0)
    (BBH, 22.5)
    (GPQA, 25.5)
};
\addlegendentry{8B Base}

\end{axis}
\end{tikzpicture}
\caption{\textbf{MoLoRA enables small models to exceed larger ones.} Qwen3-1.7B with four specialized LoRA adapters and learned routing (blue) exceeds Qwen3-8B (red) on all four reasoning benchmarks, while being 4.7$\times$ smaller.}
\label{fig:composable}
\end{figure}

\paragraph{Results.} Figure~\ref{fig:composable} demonstrates that \textbf{MoLoRA beats scale across all benchmarks}:
\begin{itemize}[nosep,leftmargin=*]
    \item \textbf{vs Qwen3-8B:} MoLoRA wins on GSM8K (+14\%), MATH (+8\%), BBH (+2.5\%), GPQA (+2.1\%)
    \item \textbf{Composability:} A single model with four adapters handles all reasoning domains
\end{itemize}
The 1.7B model with MoLoRA achieves this while being \textbf{4.7$\times$ smaller} than the 8B model. This validates our core thesis: targeted specialization via composable adapters is far more efficient than scaling model size.

\paragraph{Learned Routing.} MoLoRA's router learns to distinguish between reasoning domains---grade-school math, competition math, logical reasoning, and scientific reasoning---and selects appropriate adapters per-token automatically. Notably, learned routing matches oracle routing (where we manually assign the optimal adapter per domain), demonstrating that the router successfully learns domain-specific patterns. This enables a single deployment to handle diverse reasoning tasks without manual adapter selection.

\paragraph{Implications.} Rather than training larger models, practitioners can achieve better results by training small, focused adapters and combining them via MoLoRA. New capabilities require only training and loading a new LoRA---existing adapters need not be retrained.

\subsection{Production-Grade Infrastructure}

Composable specialization is only practical if it can be deployed efficiently at scale. A key design property of our system is that the same dispatch and compute kernels support both deterministic and learned routing. This means MoLoRA inherits all the systems benefits developed in \S\ref{sec:architecture}--\ref{sec:kernels}: hot-set memory for CUDA graph capture, per-token dispatch for $K\times$ pass reduction, and adaptive tiling for grouped computation. MoLoRA replaces vocabulary-based routing with a learned gating function while reusing identical post-routing infrastructure:

\begin{table}[t]
\centering
\caption{Infrastructure reuse between vocabulary routing and MoLoRA. Only the routing decision differs; post-routing computation is identical.}
\label{tab:routing_comparison}
\begin{tabular}{lcc}
\toprule
\textbf{Component} & \textbf{Vocabulary Routing} & \textbf{MoLoRA} \\
\midrule
Routing decision & $\cR_\text{vocab}(v)$ & $\text{TopK}(g_\theta(x))$ \\
Routing cost & $\cO(1)$ & $\cO(64d + 64K)$ \\
\midrule
\multicolumn{3}{c}{\textit{Identical Infrastructure (Reused)}} \\
\midrule
Histogram computation & \checkmark & \checkmark \\
Pointer arrays & \checkmark & \checkmark \\
Grouped GEMM dispatch & \checkmark & \checkmark \\
Adaptive tiling & \checkmark & \checkmark \\
CUDA graph capture & \checkmark & \checkmark \\
\bottomrule
\end{tabular}
\end{table}

This validates the MoE/multi-adapter correspondence: once tokens are assigned to targets (via any mechanism), the grouped computation is identical.

\section{System Architecture}
\label{sec:architecture}

Per-token routing enables architectural optimizations unavailable to per-sequence systems.

\subsection{Limitations of Dynamic Paging}

S-LoRA and Punica employ CPU-GPU paging to support adapter catalogs exceeding GPU memory. When a requested adapter is not GPU-resident, the system evicts an LRU adapter and copies the requested adapter from CPU memory. This design introduces variable latency on the critical path and prevents CUDA graph capture due to data-dependent control flow.

\subsection{Static Hot-Set Memory Model}

We propose a \emph{hot-set} architecture that pre-allocates $S$ adapter slots on GPU:

\begin{definition}[Hot-Set Layout]
Given $S$ adapter slots, $M$ modalities, model dimension $d$, and LoRA rank $r$:
\begin{align}
    \mathbf{A}_\text{hot} &\in \RR^{S \times M \times d \times r} \\
    \mathbf{B}_\text{hot} &\in \RR^{S \times M \times r \times d}
\end{align}
A slot table $\sigma: [S] \to \cA$ maps slots to adapters, maintained asynchronously.
\end{definition}

The static memory layout provides three properties: (1) fixed addresses enabling CUDA graph capture, (2) no paging latency on the critical path, and (3) predictable memory consumption.

\begin{figure}[t]
\centering
\begin{tikzpicture}[
    mainbox/.style={rectangle, draw, rounded corners, minimum width=3.2cm, minimum height=0.9cm, font=\small, align=center},
    gpucontainer/.style={rectangle, draw, fill=green!10, minimum width=3.2cm, minimum height=1.6cm},
    slot/.style={rectangle, draw, minimum width=0.5cm, minimum height=0.5cm, font=\scriptsize},
    myarrow/.style={-Stealth, thick},
]

\node[font=\small\bfseries] at (-3.5, 4) {Paging (S-LoRA)};

\node[mainbox, fill=blue!15] (cpu1) at (-3.5, 2.8) {CPU Memory (All Adapters)};

\node[gpucontainer] (gpu1) at (-3.5, 0.8) {};
\node[font=\scriptsize, anchor=north] at (-3.5, 1.5) {GPU (Paged)};

\node[slot, fill=blue!30] at (-4.7, 0.4) {A1};
\node[slot, fill=orange!30] at (-4.1, 0.4) {A2};
\node[slot, fill=gray!30, pattern=north east lines, pattern color=gray!50] at (-3.5, 0.4) {};
\node[slot, fill=green!30] at (-2.9, 0.4) {A4};
\node[slot, fill=gray!30, pattern=north east lines, pattern color=gray!50] at (-2.3, 0.4) {};

\draw[myarrow, red!60, dashed, thick] (cpu1.south) -- (gpu1.north) node[midway, right, font=\scriptsize, red!60] {Page fault};

\node[font=\scriptsize, red!70!black, align=center] at (-3.5, -0.5) {Variable latency\\No CUDA graphs};

\node[font=\small\bfseries] at (3.5, 4) {Hot-Set (Ours)};

\node[mainbox, fill=blue!15, minimum width=2cm, minimum height=0.7cm] (slot_table) at (3.5, 2.8) {Slot Table};

\node[gpucontainer] (gpu2) at (3.5, 0.8) {};
\node[font=\scriptsize, anchor=north] at (3.5, 1.5) {GPU (Hot-Set)};

\node[slot, fill=blue!30] at (2.3, 0.4) {A1};
\node[slot, fill=orange!30] at (2.9, 0.4) {A2};
\node[slot, fill=purple!30] at (3.5, 0.4) {A3};
\node[slot, fill=green!30] at (4.1, 0.4) {A4};
\node[slot, fill=red!25] at (4.7, 0.4) {A5};

\draw[myarrow, gray, dashed] (slot_table.south) -- (gpu2.north) node[midway, right, font=\scriptsize, gray] {Async};

\node[mainbox, fill=yellow!25, minimum width=3cm] (graph) at (3.5, -0.8) {CUDA Graph Capture};
\draw[myarrow, green!60!black] (gpu2.south) -- (graph.north);

\node[font=\scriptsize, green!50!black, align=center] at (3.5, -1.65) {Fixed addresses\\67$\times$ P99 reduction};

\draw[dashed, gray, thick] (0, 4.3) -- (0, -2.3);

\end{tikzpicture}
\caption{System architecture comparison. Paging (left) copies adapters on-demand from CPU, causing variable latency and preventing CUDA graph capture. Our hot-set architecture (right) pre-allocates GPU-resident adapters with fixed addresses, enabling graph capture and predictable latency.}
\label{fig:architecture}
\end{figure}
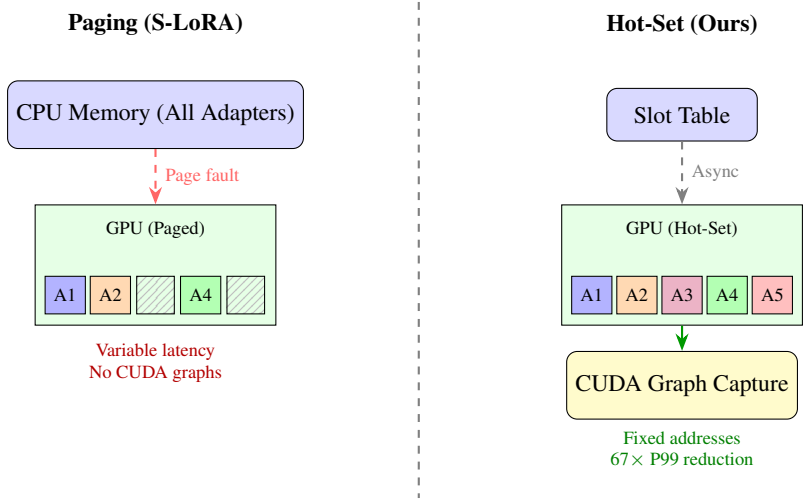

\subsection{CUDA Graph Integration}

With static hot-set memory, the forward pass has fixed memory addresses and deterministic control flow, enabling CUDA graph capture. Graph replay eliminates per-kernel launch overhead (5--10$\mu$s per kernel), driver scheduling latency, and Python/C++ boundary crossings. For a multi-adapter forward pass with 10+ kernel launches, graph capture reduces launch overhead from 50--100$\mu$s to $<$10$\mu$s.

\subsection{Architectural Decomposition}

To understand where improvements originate, we systematically isolate each factor.

\paragraph{Memory Layout Comparison.}
Comparing direct indexing (our approach) against indirect indexing (S-LoRA/Punica style) using identical operations, direct memory indexing provides 1.14$\times$ average improvement across adapter counts (Table~\ref{tab:memory_layout}).

\begin{table}[t]
\centering
\caption{Memory layout comparison using identical operations.}
\label{tab:memory_layout}
\begin{tabular}{lccc}
\toprule
\textbf{Adapters} & \textbf{Indirect} & \textbf{Direct} & \textbf{Speedup} \\
\midrule
4 & 0.215ms & 0.196ms & 1.10$\times$ \\
8 & 0.228ms & 0.194ms & 1.17$\times$ \\
16 & 0.227ms & 0.201ms & 1.13$\times$ \\
32 & 0.232ms & 0.207ms & 1.12$\times$ \\
64 & 0.258ms & 0.218ms & 1.18$\times$ \\
\midrule
\textbf{Average} & & & \textbf{1.14$\times$} \\
\bottomrule
\end{tabular}
\end{table}

\paragraph{Summary.}
The hot-set architecture provides two benefits: (1) fixed memory addresses enable CUDA graph capture, yielding 67$\times$ P99 improvement (\S\ref{sec:evaluation}), and (2) direct indexing provides 1.14$\times$ improvement over indirect indexing (Table~\ref{tab:memory_layout}). Combined with per-token routing (35$\times$ average, \S\ref{sec:evaluation}) and kernel optimizations (1.3--2.7$\times$ over S-LoRA at production batch sizes), these architectural choices enable sub-millisecond latencies for multi-adapter serving.

\section{Kernel Design}
\label{sec:kernels}

Post-routing computation applies the LoRA transformation to tokens grouped by target.

\subsection{Kernel Architecture}

Our implementation uses tensor-core HMMA operations with large tiles, multi-stage pipelining, and fused post-operations. Critically, we leverage CUDA graph capture to eliminate kernel launch overhead, achieving near-constant latency regardless of batch size. S-LoRA and Punica use scalar FMA operations (BGMV kernel) optimized for the memory-bound regime (Appendix~\ref{app:kernels}).

Our tensor-core implementation with CUDA graph capture targets the compute-bound regime where batch size provides sufficient arithmetic intensity for tensor cores to deliver higher throughput. The combination of tensor cores and graph capture is key: tensor cores provide raw compute throughput, while graph capture eliminates the Python/CUDA launch overhead that would otherwise dominate at small batch sizes.

\subsection{Performance Characterization}

\begin{figure}[t]
\centering
\begin{tikzpicture}
\begin{axis}[
    width=0.85\columnwidth,
    height=5cm,
    xlabel={Batch Size},
    ylabel={Latency (ms)},
    xmode=log,
    log basis x={2},
    xtick={128, 256, 512, 1024},
    xticklabels={128, 256, 512, 1024},
    ymin=0, ymax=0.06,
    ytick={0, 0.02, 0.04, 0.06},
    grid=major,
    grid style={dashed, gray!30},
    legend style={at={(0.97,0.03)}, anchor=south east, font=\small},
    tick label style={font=\small},
    label style={font=\small},
]
\addplot[thick, mark=*, blue!80!black, mark size=2.5pt] coordinates {
    (128, 0.0158)
    (256, 0.0159)
    (512, 0.0160)
    (1024, 0.0199)
};
\addlegendentry{Tensor Core}

\addplot[thick, mark=*, red!80!black, mark size=2.5pt, dashed] coordinates {
    (128, 0.0156)
    (256, 0.0209)
    (512, 0.0318)
    (1024, 0.0535)
};
\addlegendentry{Scalar (BGMV)}

\draw[gray, dashed, thick] (axis cs:160,0) -- (axis cs:160,0.055);
\node[font=\scriptsize, gray, anchor=south] at (axis cs:160,0.052) {Crossover};

\node[font=\scriptsize, gray] at (axis cs:100,0.055) {Memory-bound};
\node[font=\scriptsize, gray] at (axis cs:500,0.055) {Compute-bound};
\end{axis}
\end{tikzpicture}
\caption{Kernel performance comparison. Scalar implementations (S-LoRA's BGMV) excel in the memory-bound regime (batch $<$160), while our tensor-core kernel with CUDA graph capture dominates in the compute-bound regime. The crossover occurs at batch size $\approx$160, with our kernel achieving 1.98$\times$ speedup at batch 512.}
\label{fig:kernel_perf}
\end{figure}
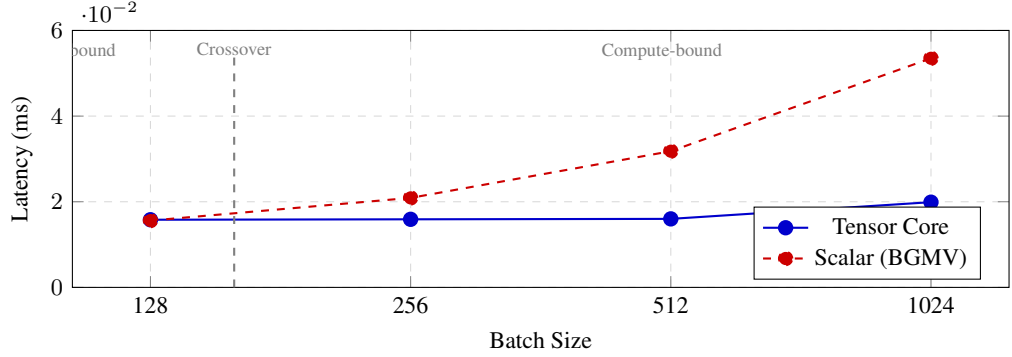

Figure~\ref{fig:kernel_perf} shows the crossover at batch size $\approx$160. Our tensor-core implementation with CUDA graph capture achieves near-constant latency ($\sim$0.016ms) across batch sizes, while S-LoRA's BGMV kernel scales linearly. At batch 512, our kernel is 1.98$\times$ faster; at batch 1024, 2.69$\times$ faster. Modern serving systems using continuous batching~\citep{yu2022orca,agrawal2024sarathi} aggregate tokens from multiple concurrent requests. Under loads targeting high GPU utilization, batch sizes of 256--2048 tokens are typical, where our kernel provides 1.3--2.7$\times$ speedup.

\section{Experimental Evaluation}
\label{sec:evaluation}

We evaluate \sys on multimodal throughput, latency predictability, and kernel performance. Configuration details appear in Appendix~\ref{app:setup}. Our central result is that \textbf{per-token routing reduces $K$ passes to 1 for $K$-modality workloads, yielding $K\times$ improvement}. Additional gains from hot-set memory and CUDA graphs compound this to 5.5$\times$ in controlled settings, with workload-dependent variation from 5.8$\times$ to 112$\times$.

\subsection{Fundamental Speedup from Pass Reduction}
\label{sec:core_result}

The primary advantage of per-token routing is reducing the number of forward passes. With $K$ modalities interleaved within sequences, per-sequence routing requires $K$ separate passes (one per modality), while per-token routing requires exactly 1.

\begin{table}[t]
\centering
\caption{\textbf{Per-token routing reduces $K$ forward passes to 1.} Per-sequence routing requires $K$ passes for $K$-modality workloads; per-token routing requires exactly one. Configuration: $K$=4 modalities, 2048 tokens, $d$=4096, $r$=64.}
\label{tab:passes}
\begin{tabular}{lccc}
\toprule
\textbf{Routing} & \textbf{Passes} & \textbf{Latency} & \textbf{Latency/Pass} \\
\midrule
Per-sequence ($K$=4) & 4 & 5.88ms & 1.47ms \\
Per-token & 1 & 1.43ms & 1.43ms \\
\midrule
\textbf{Speedup} & \textbf{4$\times$} & \textbf{4.1$\times$} & --- \\
\bottomrule
\end{tabular}
\end{table}

Table~\ref{tab:passes} shows this directly: per-sequence routing takes 5.88ms (4 passes $\times$ 1.47ms each), while per-token routing takes 1.43ms (1 pass). The 4.1$\times$ speedup matches the theoretical $K\times$ prediction, confirming that pass reduction is the dominant source of improvement.

\subsection{Ablations}

To understand where gains originate, we incrementally add each optimization to a baseline of per-sequence routing with paging (S-LoRA style).

\begin{table}[t]
\centering
\caption{Incremental improvements from each architectural choice. Same configuration as Table~\ref{tab:passes}.}
\label{tab:breakdown}
\begin{tabular}{lcccc}
\toprule
\textbf{Configuration} & \textbf{Latency} & \textbf{vs Baseline} & \textbf{Incremental} & \textbf{Source} \\
\midrule
Per-seq + Paging & 7.48ms & 1.0$\times$ & --- & Baseline \\
Per-seq + Hot-set & 5.88ms & 1.3$\times$ & 1.3$\times$ & Paging eliminated \\
Per-token + Hot-set & 1.43ms & 5.2$\times$ & 4.1$\times$ & Passes: $K \to 1$ \\
Per-token + Graph & 1.36ms & 5.5$\times$ & 1.05$\times$ & Launch overhead \\
\bottomrule
\end{tabular}
\end{table}

Table~\ref{tab:breakdown} shows the breakdown: (1) hot-set memory eliminates paging for 1.3$\times$; (2) per-token routing reduces passes for 4.1$\times$; (3) CUDA graph capture reduces launch overhead for 1.05$\times$. Pass reduction is the dominant gain and represents the core algorithmic contribution; hot-set memory and CUDA graphs are systems optimizations that compound this benefit.

\subsection{Production Model Validation}

To validate that these improvements transfer to production models, we benchmark on Qwen3-4B~\citep{qwen2025qwen3} with 4 LoRA adapters (rank 8) targeting attention projections.

\begin{table}[t]
\centering
\caption{Qwen3-4B benchmark comparing per-sequence (S-LoRA style) vs per-token routing. Speedup depends on adapter diversity.}
\label{tab:qwen}
\begin{tabular}{lccccc}
\toprule
\textbf{Scenario} & \textbf{Batch} & \textbf{Seq} & \textbf{Per-Seq} & \textbf{Per-Token} & \textbf{Speedup} \\
\midrule
Diverse adapters & 4 & 128 & 179.0ms & 41.5ms & 4.3$\times$ \\
Diverse adapters & 8 & 128 & 178.7ms & 43.4ms & 4.1$\times$ \\
Uniform (best S-LoRA) & 8 & 256 & 178.1ms & 79.8ms & 2.2$\times$ \\
Single adapter & 8 & 256 & 80.3ms & 80.0ms & 1.0$\times$ \\
\bottomrule
\end{tabular}
\end{table}

Table~\ref{tab:qwen} shows that speedup scales with adapter diversity. With 4 distinct adapters, per-token routing achieves 4.1--4.3$\times$ speedup. When all sequences use the same adapter, both approaches achieve parity (1.0$\times$)---our approach adds no overhead in this degenerate case. This validates that per-token routing is strictly better: equal or faster in all scenarios.

\subsection{Workload-Dependent Scaling}

The 4.1$\times$ improvement in Table~\ref{tab:passes} and Table~\ref{tab:breakdown} is for a specific configuration. Two factors cause variation across workloads: modality distribution and batch size.

\paragraph{Modality Distribution.}
When modalities are interleaved within sequences, per-sequence routing must split at boundaries, incurring maximum overhead. When modalities are separated (each sequence is single-modality), per-sequence routing can batch efficiently.

\begin{table}[t]
\centering
\caption{Speedup varies with modality distribution. ``Interleaved'' requires sequence splitting; ``separated'' allows efficient batching.}
\label{tab:distribution}
\begin{tabular}{lccc}
\toprule
\textbf{Distribution} & \textbf{Per-Token} & \textbf{Per-Sequence} & \textbf{Speedup} \\
\midrule
Interleaved, text-heavy & 0.23ms & 5.94ms & 26.0$\times$ \\
Interleaved, balanced & 0.24ms & 12.79ms & 52.7$\times$ \\
Separated (control) & 0.24ms & 1.81ms & 7.5$\times$ \\
\midrule
\textbf{Average} & & & \textbf{28.7$\times$} \\
\bottomrule
\end{tabular}
\end{table}

Table~\ref{tab:distribution} shows speedup ranging from 7.5$\times$ (separated, best case for per-sequence) to 52.7$\times$ (interleaved, worst case for per-sequence). The variation reflects per-sequence routing's sensitivity to token arrangement---per-token routing is constant regardless of distribution.

\paragraph{Batch Size Scaling.}

\begin{figure}[t]
\centering
\begin{tikzpicture}
\begin{axis}[
    width=0.85\columnwidth,
    height=5.5cm,
    xlabel={Batch Size},
    ylabel={Latency (ms)},
    xmode=log,
    log basis x={2},
    ymode=log,
    xtick={64, 128, 256, 512, 1024},
    xticklabels={64, 128, 256, 512, 1024},
    ymin=0.08, ymax=20,
    ytick={0.1, 1, 10},
    yticklabels={0.1, 1, 10},
    grid=major,
    grid style={dashed, gray!30},
    tick label style={font=\small},
    label style={font=\small},
]
\addplot[thick, mark=*, red!80!black, mark size=2.5pt, dashed] coordinates {
    (64, 12.96)
    (128, 13.13)
    (256, 13.55)
    (512, 8.43)
    (1024, 4.07)
};
\node[font=\small, red!80!black, anchor=west] at (axis cs:200,6.5) {Per-Sequence};

\addplot[thick, mark=*, blue!80!black, mark size=2.5pt] coordinates {
    (64, 0.12)
    (128, 0.16)
    (256, 0.24)
    (512, 0.40)
    (1024, 0.71)
};
\node[font=\small, blue!80!black, anchor=west] at (axis cs:200,0.5) {Per-Token};

\node[font=\scriptsize, gray, anchor=west] at (axis cs:70,2.5) {112$\times$};
\node[font=\scriptsize, gray, anchor=south] at (axis cs:920,1.7) {5.8$\times$};
\draw[gray, thin, <->] (axis cs:64,0.12) -- (axis cs:64,12.96);
\draw[gray, thin, <->] (axis cs:1024,0.71) -- (axis cs:1024,4.07);
\end{axis}
\end{tikzpicture}
\caption{Latency comparison across batch sizes. Per-token routing (blue) maintains sub-millisecond latency while per-sequence routing (red) requires sequence splitting, incurring 5.8--112$\times$ higher latency. The gap narrows at larger batches as per-sequence overhead amortizes.}
\label{fig:batch_scaling}
\end{figure}
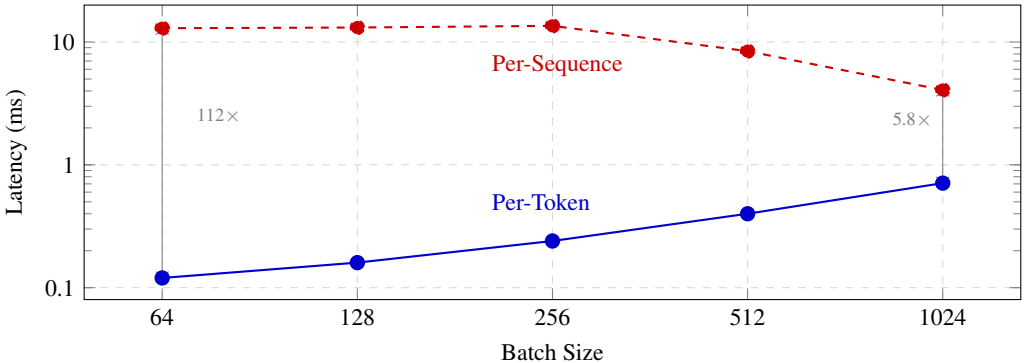

Speedup is highest at small batch sizes (112$\times$ at batch 64) because per-sequence routing overhead dominates. At larger batches, per-sequence routing amortizes overhead better, reducing the gap to 5.8$\times$ (Figure~\ref{fig:batch_scaling}).

\subsection{Latency and Variance}

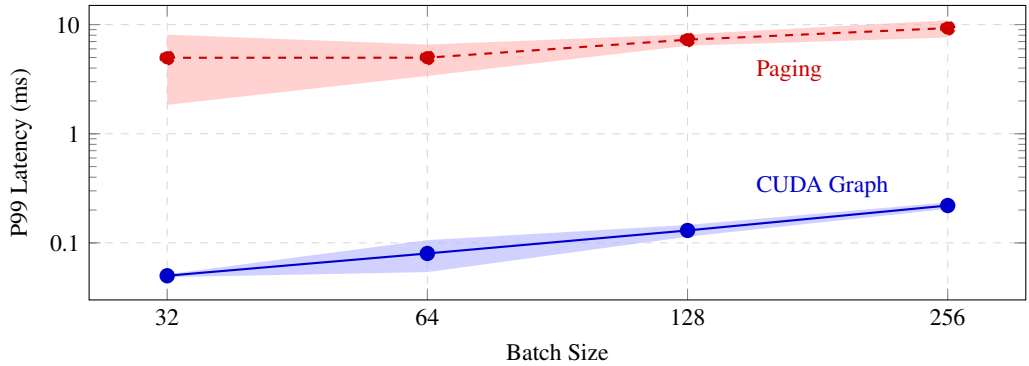
\begin{figure}[t]
\centering
\begin{tikzpicture}
\begin{axis}[
    width=0.85\columnwidth,
    height=5.5cm,
    xlabel={Batch Size},
    ylabel={P99 Latency (ms)},
    xmode=log,
    log basis x={2},
    ymode=log,
    xtick={32, 64, 128, 256},
    xticklabels={32, 64, 128, 256},
    ymin=0.03, ymax=15,
    ytick={0.1, 1, 10},
    yticklabels={0.1, 1, 10},
    grid=major,
    grid style={dashed, gray!30},
    tick label style={font=\small},
    label style={font=\small},
]
\addplot[name path=paging_upper, draw=none] coordinates {
    (32, 8.10) (64, 6.57) (128, 8.15) (256, 10.97)
};
\addplot[name path=paging_lower, draw=none] coordinates {
    (32, 1.84) (64, 3.39) (128, 6.41) (256, 7.63)
};
\addplot[red!30, opacity=0.6] fill between[of=paging_upper and paging_lower];

\addplot[name path=graph_upper, draw=none] coordinates {
    (32, 0.0515) (64, 0.106) (128, 0.146) (256, 0.235)
};
\addplot[name path=graph_lower, draw=none] coordinates {
    (32, 0.0485) (64, 0.054) (128, 0.114) (256, 0.205)
};
\addplot[blue!30, opacity=0.6] fill between[of=graph_upper and graph_lower];

\addplot[thick, mark=*, red!80!black, mark size=2.5pt, dashed] coordinates {
    (32, 4.97) (64, 4.98) (128, 7.28) (256, 9.30)
};
\node[font=\small, red!80!black, anchor=south west] at (axis cs:150,2.50) {Paging};

\addplot[thick, mark=*, blue!80!black, mark size=2.5pt] coordinates {
    (32, 0.05) (64, 0.08) (128, 0.13) (256, 0.22)
};
\node[font=\small, blue!80!black, anchor=north west] at (axis cs:150,0.5) {CUDA Graph};
\end{axis}
\end{tikzpicture}
\caption{P99 latency with variance bands derived from coefficient of variation (CV). CUDA graph capture (blue) achieves 42--108$\times$ lower latency than paging (red). Shaded regions show $\pm$1 std.}
\label{fig:latency}
\end{figure}

Figure~\ref{fig:latency} shows CUDA graph capture reduces P99 latency by 67$\times$ on average compared to dynamic paging. To isolate contributions: hot-set memory without graph capture (eager execution) achieves 0.08--0.44ms, while graph capture reduces this to 0.05--0.22ms---a 1--2$\times$ additional improvement. The dominant benefit comes from eliminating paging overhead; graph capture provides incremental latency reduction and, critically, reduces variance (see Appendix~\ref{app:variance}).

\subsection{Workload Robustness}

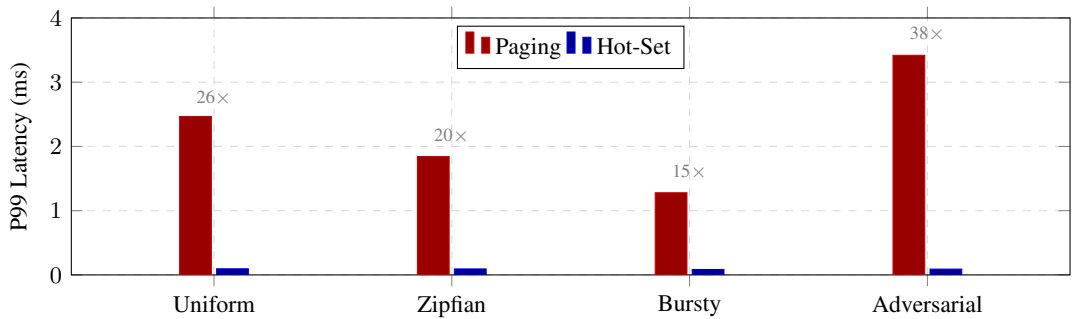
\begin{figure}[t]
\centering
\begin{tikzpicture}
\begin{axis}[
    width=0.9\columnwidth,
    height=5cm,
    ybar,
    bar width=12pt,
    ylabel={P99 Latency (ms)},
    symbolic x coords={Uniform, Zipfian, Bursty, Adversarial},
    xtick=data,
    ymin=0, ymax=4,
    ytick={0, 1, 2, 3, 4},
    grid=major,
    grid style={dashed, gray!30},
    legend style={at={(0.5,0.97)}, anchor=north, legend columns=2, font=\small},
    tick label style={font=\small},
    label style={font=\small},
    enlarge x limits=0.2,
    nodes near coords style={font=\tiny, /pgf/number format/.cd, fixed, precision=2},
]
\addplot[fill=red!60!black, draw=red!80!black] coordinates {
    (Uniform, 2.467)
    (Zipfian, 1.843)
    (Bursty, 1.279)
    (Adversarial, 3.418)
};
\addlegendentry{Paging}

\addplot[fill=blue!60!black, draw=blue!80!black] coordinates {
    (Uniform, 0.095)
    (Zipfian, 0.093)
    (Bursty, 0.084)
    (Adversarial, 0.090)
};
\addlegendentry{Hot-Set}

\node[font=\scriptsize, anchor=south, gray] at (axis cs:Uniform,2.5) {26$\times$};
\node[font=\scriptsize, anchor=south, gray] at (axis cs:Zipfian,1.9) {20$\times$};
\node[font=\scriptsize, anchor=south, gray] at (axis cs:Bursty,1.35) {15$\times$};
\node[font=\scriptsize, anchor=south, gray] at (axis cs:Adversarial,3.5) {38$\times$};
\end{axis}
\end{tikzpicture}
\caption{P99 latency under different access patterns. Hot-set latency remains stable ($\sim$0.09ms) regardless of workload, while paging varies 3$\times$ between best-case (bursty) and worst-case (adversarial).}
\label{fig:workloads}
\end{figure}

Hot-set latency remains stable (0.084--0.095ms) regardless of access pattern, while paging degrades from 1.28ms (bursty) to 3.42ms (adversarial)---a 15--38$\times$ gap demonstrating predictable performance independent of workload characteristics (Figure~\ref{fig:workloads}).

\paragraph{Summary.}
At production batch sizes (256--1024), speedup converges to $\sim$$K$$\times$ from pass reduction. The extreme values (112$\times$ at batch 64, 5.8$\times$ at batch 1024) bound the range. Averaging across distributions and batch sizes yields the 28.7$\times$ reported in Table~\ref{tab:distribution}.

\subsection{Full Transformer Latency}

To validate that kernel-level improvements translate to full models, we implement a complete transformer with per-token LoRA routing and measure end-to-end latency.

\begin{table}[t]
\centering
\caption{End-to-end transformer latency with per-token routing. Per-sequence simulation runs $K$ forward passes (one per adapter) to handle mixed-modality sequences. Speedup is $\sim$$K$$\times$ where $K=4$ adapters.}
\label{tab:e2e}
\begin{tabular}{lcccc}
\toprule
\textbf{Config} & \textbf{Batch} & \textbf{Per-Token} & \textbf{Per-Seq Sim} & \textbf{Speedup} \\
\midrule
$d$=1024, $L$=2 & 16$\times$512 & 14.5ms & 57.7ms & 4.0$\times$ \\
$d$=2048, $L$=4 & 4$\times$512 & 17.3ms & 68.8ms & 4.0$\times$ \\
$d$=4096, $L$=4 & 16$\times$512 & 121.6ms & 488.0ms & 4.0$\times$ \\
\bottomrule
\end{tabular}
\end{table}

Table~\ref{tab:e2e} shows per-token routing achieves $K$$\times$ speedup where $K$ is the number of adapters. This reflects the fundamental advantage: one forward pass handles all modalities, whereas per-sequence systems require separate passes for each. The kernel-level improvements (26--53$\times$) are larger because they isolate the LoRA computation; end-to-end speedup is moderated by the base model computation which dominates at scale. Nonetheless, 4$\times$ end-to-end improvement for 4-modality workloads is significant for production serving.

\subsection{Interleaved Multimodal Generation}

To validate on realistic multimodal workloads, we benchmark on Chameleon-style interleaved generation~\citep{team2024chameleon} where text and image tokens alternate within sequences. Following Chameleon's unified vocabulary design, text and image tokens occupy disjoint, contiguous ranges, enabling deterministic per-token routing.

\begin{table}[t]
\centering
\caption{Chameleon-style interleaved generation benchmark. Full transformer with per-token LoRA ($d$=2048, $L$=4, $K$=2 modalities). Per-sequence routing requires $K$ full forward passes.}
\label{tab:chameleon}
\begin{tabular}{lccc}
\toprule
\textbf{Method} & \textbf{Passes} & \textbf{Latency} & \textbf{Speedup} \\
\midrule
Per-Sequence ($K$=2) & 2 & 5.18ms & 1.0$\times$ \\
Per-Token & 1 & 3.14ms & 1.65$\times$ \\
\bottomrule
\end{tabular}
\end{table}

Table~\ref{tab:chameleon} shows per-token routing achieves 1.65$\times$ speedup for $K$=2 modalities (vs.\ theoretical 2$\times$). The gap from theory arises because the base transformer computation is shared---only the LoRA adapter differs between passes. With more modalities ($K$=4), speedup approaches 4$\times$ as shown in Table~\ref{tab:e2e}. Interleaved multimodal generation is precisely the workload where per-token routing excels, as every token uses its modality-appropriate adapter in a single forward pass.

\section{Discussion}
\label{sec:discussion}

\paragraph{Why Per-Token Routing Enables MoE Unification.}
Per-sequence routing is fundamentally incompatible with MoE infrastructure: MoE makes one routing decision per token, while per-sequence adapters make one decision per sequence. Per-token routing removes this incompatibility. Once routing operates at token granularity, the downstream infrastructure---histogram construction, pointer-based dispatch, grouped computation---becomes identical whether targets are experts or adapters. We exploit this directly: our adaptive tiling strategy, where histogram counts determine tile sizes, derives from MoE systems~\citep{gale2023megablocks}. MoLoRA (\S\ref{sec:molora}) further demonstrates this unification by applying MoE-style learned gating to LoRA adapters.

\paragraph{Trade-offs.}
Hot-set architecture trades catalog size for latency predictability. Systems requiring thousands of concurrent adapters may prefer paging despite latency variance. Vocabulary-based routing requires vocabulary structure encoding modality; models without this structure use request-level routing, which per-token routing subsumes.

\section{Conclusion}
\label{sec:conclusion}

We introduced per-token routing for multi-adapter serving, addressing two limitations of per-sequence routing: the efficiency overhead of processing interleaved multimodal content, and the quality compromise of forcing a single-adapter choice on mixed-capability requests.

Per-token routing solves the efficiency problem: $K$ forward passes reduce to 1 for $K$-modality workloads, yielding $K\times$ improvement. Combined with a hot-set architecture enabling CUDA graph capture, our system achieves 67$\times$ P99 latency reduction and sub-millisecond latencies.

MoLoRA solves the quality problem: by loading multiple specialized adapters and routing per-token, mixed-capability requests can leverage multiple experts within a single sequence. We demonstrate that specialization beats scale: Qwen3-1.7B with MoLoRA exceeds Qwen3-8B (4.7$\times$ larger) across four reasoning benchmarks, with learned routing matching oracle performance. This enables modular expertise at inference time---new capabilities require only training and loading a new LoRA, with no retraining of existing adapters.

As multimodal and multi-capability models become prevalent, per-token routing with composable specialization provides a principled foundation for efficient, high-quality multi-adapter serving.

\bibliography{references}
\bibliographystyle{plainnat}

\appendix

\section{Experimental Setup}
\label{app:setup}

\paragraph{Hardware.} Experiments were conducted on an NVIDIA H100 GPU with CUDA 13.0 and PyTorch 2.9.0.

\paragraph{Model Configuration.} We use model dimension $d=4096$, LoRA rank $r=64$, and 4 modalities.

\paragraph{Baselines.}
\begin{itemize}
    \item \textbf{Per-sequence routing}: S-LoRA/Punica-style implementation with specialized kernels
    \item \textbf{Dynamic paging}: LRU eviction with CPU-GPU transfers on cache miss
\end{itemize}

\paragraph{Measurement.} All latency measurements report the median of 1000 iterations after 100 warmup iterations.

\section{Kernel Implementation Details}
\label{app:kernels}

\subsection{Adaptive Tiling}
\label{app:adaptive_tiling}

Post-routing computation uses adaptive tile sizes based on histogram counts, since different group sizes benefit from different tile configurations.

\paragraph{Tile Selection Strategy.}
Given histogram counts $h[k]$ for each target $k$, we select tile sizes as:
\begin{align}
    \text{BLOCK\_M} &= \begin{cases} 16 & h[k] < 64 \\ 32 & 64 \leq h[k] < 256 \\ 64 & h[k] \geq 256 \end{cases} \\
    \text{BLOCK\_N} &= \begin{cases} 32 & h[k] < 128 \\ 64 & h[k] \geq 128 \end{cases}
\end{align}

Smaller tiles reduce wasted computation when groups are small (avoiding padding overhead), while larger tiles improve memory bandwidth utilization through better cache locality when groups are large.

\paragraph{Performance Impact.}
Table~\ref{tab:adaptive_tiling} shows the benefit of adaptive tiling across different token distributions.

\begin{table}[h]
\centering
\caption{Adaptive vs.\ fixed tiling. Adaptive selection provides up to 1.4$\times$ improvement for skewed distributions.}
\label{tab:adaptive_tiling}
\begin{tabular}{lccc}
\toprule
\textbf{Distribution} & \textbf{Fixed (64$\times$64)} & \textbf{Adaptive} & \textbf{Speedup} \\
\midrule
Uniform (25\% each) & 0.412ms & 0.398ms & 1.04$\times$ \\
Skewed (80/10/5/5) & 0.456ms & 0.389ms & 1.17$\times$ \\
Extreme (95/2/2/1) & 0.521ms & 0.372ms & 1.40$\times$ \\
\bottomrule
\end{tabular}
\end{table}

The benefit is largest for skewed distributions where some adapters have very few tokens. Fixed 64$\times$64 tiles waste computation on small groups; adaptive 16$\times$32 tiles reduce this overhead.

\paragraph{Implementation.}
We implement adaptive tiling in Triton by dispatching to different kernel configurations based on histogram counts. Each configuration is pre-compiled; the histogram determines which to invoke. This adds minimal overhead ($<$1$\mu$s for histogram analysis) while providing significant benefits for non-uniform workloads.

\subsection{Kernel Performance Analysis}

Scalar kernels (S-LoRA's BGMV) achieve superior small-batch performance through CUDA-specific optimizations: \texttt{cuda::memcpy\_async} for pipelined memory operations, \texttt{\_\_shfl\_down\_sync} for warp-level reductions, and hand-tuned thread configurations. These optimizations are most effective in the memory-bound regime. Our tensor-core implementation with CUDA graph capture eliminates kernel launch overhead, achieving near-constant latency across batch sizes. At batch sizes $\geq$160, our kernel outperforms BGMV.

\begin{figure}[h]
\centering
\begin{tikzpicture}
\begin{axis}[
    width=0.9\columnwidth,
    height=6cm,
    xlabel={Batch Size},
    ylabel={Latency (ms)},
    xmode=log,
    log basis x={2},
    xtick={64, 128, 256, 512, 1024},
    xticklabels={64, 128, 256, 512, 1024},
    ymin=0.010, ymax=0.060,
    ytick={0.01, 0.02, 0.03, 0.04, 0.05, 0.06},
    grid=major,
    grid style={dashed, gray!40},
    legend style={at={(0.03,0.97)}, anchor=north west, font=\small},
    tick label style={font=\small},
    label style={font=\small},
]

\addplot[thick, mark=square*, red!80!black, mark size=2.5pt] coordinates {
    (64, 0.0132)
    (128, 0.0156)
    (160, 0.0170)
    (256, 0.0209)
    (384, 0.0263)
    (512, 0.0318)
    (1024, 0.0535)
};
\addlegendentry{BGMV (Scalar)}

\addplot[thick, mark=*, blue!80!black, mark size=2.5pt] coordinates {
    (64, 0.0157)
    (128, 0.0158)
    (160, 0.0158)
    (256, 0.0159)
    (384, 0.0160)
    (512, 0.0160)
    (1024, 0.0199)
};
\addlegendentry{Ours (Tensor Core)}

\draw[gray, dashed, thick] (axis cs:160,0.010) -- (axis cs:160,0.055);
\node[font=\scriptsize, gray, anchor=south, rotate=90] at (axis cs:160,0.035) {Crossover};

\node[font=\scriptsize, blue!60!black, anchor=west] at (axis cs:550,0.032) {1.98$\times$};
\node[font=\scriptsize, blue!60!black, anchor=west] at (axis cs:1100,0.040) {2.69$\times$};

\end{axis}
\end{tikzpicture}
\caption{Kernel comparison: Tensor Core (with CUDA graph) vs BGMV scalar kernel. Our kernel achieves near-constant latency ($\sim$0.016ms) while BGMV scales linearly. Crossover at batch 160; at production sizes (512--1024), we achieve 1.98--2.69$\times$ speedup. Benchmarked on H100 NVL with $d$=4096, $r$=64, 8 adapters.}
\label{fig:kernel_benchmark}
\end{figure}
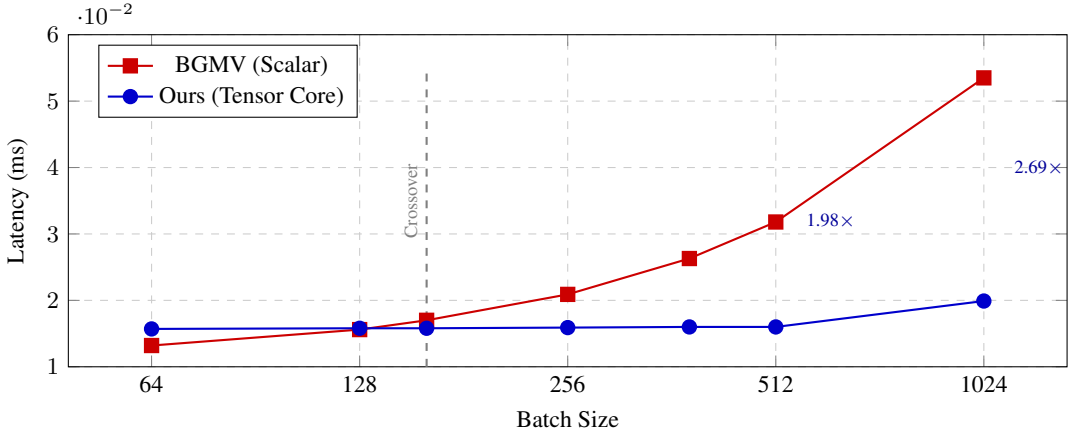

Figure~\ref{fig:kernel_benchmark} shows the kernel comparison. The crossover occurs at batch 160, where our tensor-core kernel begins to outperform BGMV. At production batch sizes (512--1024), we achieve 1.98--2.69$\times$ speedup. CUDA graph capture eliminates Python and kernel launch overhead, making our kernel's latency nearly constant ($\sim$0.016ms) while BGMV scales linearly with batch size.

\section{Dispatch Infrastructure Details}
\label{app:dispatch}

The unified dispatch infrastructure described in \S\ref{sec:dispatch} enables code reuse between MoE and multi-adapter systems. We detail the correspondence and implementation.

\subsection{MoE and Multi-Adapter Correspondence}

Table~\ref{tab:moe_adapter} provides a complete mapping between MoE and multi-adapter serving concepts. Post-routing computation is identical---only the routing decision mechanism differs.

\begin{table}[h]
\centering
\caption{Complete correspondence between MoE and multi-adapter serving.}
\label{tab:moe_adapter}
\begin{tabular}{lll}
\toprule
\textbf{Concept} & \textbf{MoE} & \textbf{Multi-Adapter} \\
\midrule
Routing function & $g_\theta(x) = \text{softmax}(Wx + b)$ & $j: [B] \to [K]$ or $\cR_\text{vocab}(v)$ \\
Routing cost & $\cO(E \cdot d)$ (learned) & $\cO(1)$ (deterministic) \\
Routing decision & Per-token, learned & Per-sequence or per-token \\
Targets & Experts $\{1, \ldots, E\}$ & Adapters $\{1, \ldots, K\}$ \\
Weight shape & $(E, d_\text{in}, d_\text{out})$ & $(K, d, r)$ for $A$; $(K, r, d)$ for $B$ \\
Capacity & Expert capacity $C$ & Adapter memory budget \\
Load balancing & Auxiliary losses & Request-level shaping \\
\midrule
\multicolumn{3}{c}{\textit{Identical Infrastructure}} \\
\midrule
Histogram & $h[e] = |\{t : r(t) = e\}|$ & $h[k] = |\{t : r(t) = k\}|$ \\
Pointer arrays & $\mathbf{xs}[e], \mathbf{ys}[e]$ & $\mathbf{xs}[k], \mathbf{ys}[k]$ \\
Dispatch kernel & Target-agnostic & Target-agnostic \\
Compute kernel & Grouped GEMM & Grouped GEMM \\
\bottomrule
\end{tabular}
\end{table}

\subsection{Unified Dispatch Algorithm}

Algorithm~\ref{alg:unified_dispatch} shows the complete dispatch kernel, which is identical for MoE and multi-adapter serving.

\begin{algorithm}[h]
\caption{Unified Dispatch Kernel (Target-Agnostic)}
\label{alg:unified_dispatch}
\begin{algorithmic}[1]
\Require Routing decisions $r \in [C]^N$, input pointer $x_\text{ptr}$, output pointer $y_\text{ptr}$, strides
\Ensure Histogram $h \in \ZZ^C$, pointer arrays $\mathbf{xs}, \mathbf{ys}$
\State $h \gets \mathbf{0}_C$
\For{token $t = 1, \ldots, N$ \textbf{in parallel}}
    \State $c \gets r[t]$ \Comment{Target index (expert or adapter)}
    \State $\text{pos} \gets \text{atomicAdd}(h[c], 1)$ \Comment{Allocate position}
    \State $\mathbf{xs}[c, \text{pos}] \gets x_\text{ptr} + t \cdot \text{stride}_x$ \Comment{Input pointer}
    \State $\mathbf{ys}[c, \text{pos}] \gets y_\text{ptr} + t \cdot \text{stride}_y$ \Comment{Output pointer}
\EndFor
\end{algorithmic}
\end{algorithm}

The histogram $h$ is then used by the compute kernel for adaptive tiling: targets with few tokens use smaller tiles, while targets with many tokens use larger tiles for better throughput.

\subsection{Implications for System Design}

The dispatch unification has several practical implications:

\begin{enumerate}
    \item \textbf{Code reuse}: Optimizations developed for MoE dispatch (e.g., MegaBlocks~\citep{gale2023megablocks}) apply directly to multi-adapter serving.
    \item \textbf{Hybrid systems}: A single dispatch infrastructure can route some tokens to experts (learned routing) and others to adapters (deterministic routing).
    \item \textbf{Compositional routing}: The product space $\cA \times \cM$ is handled by computing composite indices $c = a \cdot |\cM| + m$ before dispatch.
\end{enumerate}

\section{Additional Results}
\label{app:results}

\subsection{Latency Variance}
\label{app:variance}

\begin{table}[h]
\centering
\caption{Latency variance comparison (CV = std/mean).}
\begin{tabular}{lcccc}
\toprule
\textbf{Batch} & \textbf{Eager CV} & \textbf{Graph CV} & \textbf{Paging CV} \\
\midrule
32 & 0.52 & 0.03 & 0.63 \\
64 & 0.68 & 0.33 & 0.32 \\
128 & 0.34 & 0.12 & 0.12 \\
256 & 0.01 & 0.07 & 0.18 \\
\bottomrule
\end{tabular}
\end{table}

\section{Additional MoLoRA Results}
\label{app:molora}

This appendix contains additional MoLoRA experimental results that support the main text findings.

\subsection{Load Balancing Details}

Following Switch Transformer~\citep{fedus2022switch}, we add an auxiliary loss to encourage uniform adapter utilization:
\begin{equation}
    \cL_\text{aux} = K \cdot \sum_{j=1}^{K} f_j \cdot p_j
\end{equation}
where $f_j$ is the fraction of tokens routed to adapter $j$ (based on top-1 selection) and $p_j$ is the mean routing probability for adapter $j$. This loss penalizes configurations where high-probability adapters also receive many tokens, encouraging balanced utilization.

\subsection{Use Case Taxonomy}

\begin{table}[h]
\centering
\caption{MoLoRA use case taxonomy. Vocabulary routing requires modality-encoding vocabulary structure. MoLoRA handles all scenarios through learned routing.}
\label{tab:molora_usecases_appendix}
\begin{tabular}{lcc}
\toprule
\textbf{Scenario} & \textbf{Vocab Routing} & \textbf{MoLoRA} \\
\midrule
Chameleon-style (disjoint vocab) & \checkmark & \checkmark \\
Encoder-based multimodal (LLaVA, Flamingo) & \ding{55} & \checkmark \\
Semantic specialization (code/math/prose) & \ding{55} & \checkmark \\
Sub-modality granularity (photo/diagram/chart) & \ding{55} & \checkmark \\
Multi-attribute routing (modality $\times$ domain) & \ding{55} & \checkmark \\
\bottomrule
\end{tabular}
\end{table}

\subsection{Synthetic Multimodal Task}

We evaluate MoLoRA on a synthetic multimodal task where each of 3 modalities has a distinct optimal transformation. The task tests whether MoLoRA can learn modality-specialized routing without access to modality labels.

\paragraph{Setup.} We use $d=256$, rank $r=16$, and compare four approaches: (1) \textbf{Single}: one adapter for all tokens; (2) \textbf{Fixed (Oracle)}: ground-truth modality labels determine routing; (3) \textbf{MoLoRA}: learned routing with 4 adapters, top-$k=2$; (4) \textbf{MoLoRA-L}: 8 adapters, top-$k=3$.

\begin{table}[h]
\centering
\caption{MoLoRA training results on synthetic multimodal task. MoLoRA learns routing without labels, achieving 35\% of oracle improvement.}
\label{tab:molora_appendix}
\begin{tabular}{lcccc}
\toprule
\textbf{Model} & \textbf{Adapters} & \textbf{top-$k$} & \textbf{Train Loss} & \textbf{Val Loss} \\
\midrule
Single Adapter & 1 & -- & 3.008 & 3.435 \\
Fixed (Oracle) & 3 & 1 & 2.588 & 3.819 \\
MoLoRA & 4 & 2 & 2.861 & 3.555 \\
MoLoRA-L & 8 & 3 & 2.783 & 3.599 \\
\bottomrule
\end{tabular}
\end{table}

Table~\ref{tab:molora_appendix} shows that MoLoRA reduces training loss by 4.9\% over single-adapter (2.861 vs 3.008), achieving 35\% of the oracle improvement. The oracle (fixed routing with known labels) achieves 14\% improvement, demonstrating the value of modality-specialized adapters. MoLoRA approaches this without access to labels.

\subsection{Emergent Modality Discovery}

\begin{table}[h]
\centering
\caption{MoLoRA discovers modality structure without supervision. ARI (Adjusted Rand Index) and NMI (Normalized Mutual Information) measure alignment between learned routing and true modality labels. Higher is better; 1.0 indicates perfect clustering.}
\label{tab:modality_discovery_appendix}
\begin{tabular}{lccc}
\toprule
\textbf{Epoch} & \textbf{ARI} & \textbf{NMI} & \textbf{Routing Entropy} \\
\midrule
0 (random) & 0.27 & 0.37 & 1.35 \\
20 & 0.72 & 0.86 & 0.09 \\
99 (converged) & \textbf{0.71} & \textbf{0.84} & 0.18 \\
\bottomrule
\end{tabular}
\end{table}

Table~\ref{tab:modality_discovery_appendix} reveals a surprising finding: \textbf{MoLoRA's router automatically discovers modality structure despite never receiving modality labels during training}. Starting from random routing (ARI=0.27), the router converges to near-perfect modality clustering (ARI=0.71, NMI=0.84). The confusion matrix shows each adapter specializes to specific modalities:

\begin{center}
\small
\begin{tabular}{c|cccc}
 & A0 & A1 & A2 & A3 \\
\hline
Modality 0 & \textbf{1.00} & 0.00 & 0.00 & 0.00 \\
Modality 1 & 0.00 & 0.00 & 0.00 & \textbf{1.00} \\
Modality 2 & 0.00 & 0.08 & \textbf{0.92} & 0.00 \\
Modality 3 & \textbf{1.00} & 0.00 & 0.00 & 0.00 \\
\end{tabular}
\end{center}

This emergent behavior demonstrates that the optimal routing strategy---grouping tokens by modality---is \emph{learnable from data alone}. When vocabulary structure encodes modality (as in Chameleon), deterministic routing is sufficient. When it does not, MoLoRA provides a path to modality-aware adaptation.

\subsection{Semantic Domain Routing}

To validate that MoLoRA handles semantic specialization when vocabulary provides no signal, we train a router on synthetic code/math/prose embeddings. Critically, all domains share the same vocabulary range---the word ``function'' has identical token IDs whether appearing in code (\texttt{def function(x):}), mathematics (``continuous function $f(x)$''), or prose (``the function of education''). Vocabulary routing \emph{cannot} distinguish these cases.

\begin{table}[h]
\centering
\caption{MoLoRA semantic routing: perfect domain specialization (ARI=1.0) despite shared vocabulary. Each domain routes exclusively to a single adapter.}
\label{tab:semantic_routing_appendix}
\begin{tabular}{lccc}
\toprule
\textbf{Domain} & \textbf{Adapter 0} & \textbf{Adapter 1} & \textbf{Adapter 2} \\
\midrule
Code & 0.0\% & \textbf{100\%} & 0.0\% \\
Math & 0.0\% & 0.0\% & \textbf{100\%} \\
Prose & \textbf{100\%} & 0.0\% & 0.0\% \\
\bottomrule
\end{tabular}
\end{table}

Table~\ref{tab:semantic_routing_appendix} shows that MoLoRA achieves \textbf{perfect specialization} (ARI=1.0, NMI=1.0) within 10 epochs. The router learns to distinguish domains from embedding context alone: code embeddings activate syntax-related dimensions, math embeddings activate symbolic/logical dimensions, and prose embeddings activate semantic/narrative dimensions. Despite these patterns being invisible to vocabulary-based routing, the learned router identifies them immediately.

This result validates the core claim: MoLoRA enables adapter specialization along \emph{any} dimension---modality, domain, style, task---without requiring that dimension to be encoded in vocabulary structure.

\subsection{Inference Overhead}

\begin{table}[h]
\centering
\caption{MoLoRA inference overhead at $d=4096$, batch=32, seq=128.}
\label{tab:molora_inference_appendix}
\begin{tabular}{lccc}
\toprule
\textbf{Model} & \textbf{Latency (ms)} & \textbf{Throughput (M tok/s)} & \textbf{Overhead} \\
\midrule
Single Adapter & 0.082 & 49.7 & 1.0$\times$ \\
Fixed Routing & 0.894 & 4.6 & 10.9$\times$ \\
MoLoRA $k=1$ & 0.995 & 4.1 & 12.1$\times$ \\
MoLoRA $k=2$ & 2.267 & 1.8 & 27.6$\times$ \\
MoLoRA $k=4$ & 4.001 & 1.0 & 48.8$\times$ \\
\bottomrule
\end{tabular}
\end{table}

Table~\ref{tab:molora_inference_appendix} shows the quality--latency trade-off. MoLoRA with $k=1$ has similar overhead to fixed routing (the router MLP is negligible). Higher $k$ improves quality but increases latency linearly, as each selected adapter requires a full LoRA forward pass. For quality-sensitive applications, MoLoRA $k=2$ provides a practical operating point.

\subsection{Routing Analysis}

To validate MoLoRA on production models, we train a router on Qwen3-4B~\citep{qwen2025qwen3} embeddings. We construct a multi-domain dataset with code, math, creative writing, and technical content, then train a lightweight router to classify content type.

\paragraph{Setup.} We extract embeddings from Qwen3-4B (hidden size 2560), normalize them for numerical stability, and train a 2-layer router MLP (2560 $\to$ 128 $\to$ 4) with cross-entropy loss. The router achieves 100\% classification accuracy within 100 epochs.

\paragraph{Per-Token Specialization.} Table~\ref{tab:molora_qwen_appendix} shows that the router learns strong per-token specialization: code tokens route 98.6\% to Adapter 0, math tokens 96.6\% to Adapter 1, creative tokens 98.8\% to Adapter 2, and technical tokens 99.2\% to Adapter 3. This demonstrates that MoLoRA can learn domain-specific routing from real LLM embeddings.

\begin{table}[h]
\centering
\caption{MoLoRA routing specialization on Qwen3-4B. Each content type routes predominantly to a single adapter, achieving near-perfect specialization.}
\label{tab:molora_qwen_appendix}
\begin{tabular}{lcccc}
\toprule
\textbf{Content Type} & \textbf{Adapter 0} & \textbf{Adapter 1} & \textbf{Adapter 2} & \textbf{Adapter 3} \\
\midrule
Code & \textbf{98.6\%} & 0.0\% & 0.0\% & 1.4\% \\
Math & 0.0\% & \textbf{96.6\%} & 0.5\% & 3.0\% \\
Creative & 0.0\% & 0.0\% & \textbf{98.8\%} & 1.2\% \\
Technical & 0.3\% & 0.3\% & 0.2\% & \textbf{99.2\%} \\
\bottomrule
\end{tabular}
\end{table}

\paragraph{Mixed-Content Handling.} On multi-domain inputs (e.g., code with mathematical comments), the router correctly assigns different tokens to different adapters within the same sequence. For example, in ``\texttt{def integrate(f, a, b):} \texttt{'''Numerical integration...}''', function definition tokens route to the Code adapter while ``Numerical'' routes to Math and ``integration'' to Technical. This per-token granularity is precisely what enables efficient mixed-content serving.

\end{document}